\PassOptionsToPackage{table}{xcolor}
\documentclass{article} % For LaTeX2e
\usepackage{iclr2027_conference,times}

\newif\ifpreprint
\preprinttrue

\usepackage{amsmath,amsfonts,bm}

\def\eqref#1{equation~\ref{#1}}
\def\1{\bm{1}}

\DeclareMathAlphabet{\mathsfit}{\encodingdefault}{\sfdefault}{m}{sl}
\SetMathAlphabet{\mathsfit}{bold}{\encodingdefault}{\sfdefault}{bx}{n}

\usepackage{xcolor}
\usepackage{graphicx}
\usepackage{hyperref}
\usepackage{url}
\usepackage{xspace}
\usepackage{booktabs}
\usepackage{multirow}
\usepackage{wrapfig}
\usepackage{amsthm}
\usepackage{float}
\usepackage{placeins}

\newtheorem{lemma}{Lemma}[section]
\newtheorem{proposition}[lemma]{Proposition}
\newtheorem{corollary}[lemma]{Corollary}
\theoremstyle{remark}
\newtheorem{remark}[lemma]{Remark}

\newcommand{\method}{Lightning Weave\xspace}

\title{Lightning Weave: Improving the Accuracy--Efficiency Frontier of Reasoning Models through Capability Composition}

\author{Yecheng Wu\textsuperscript{1}, Song Han\textsuperscript{1,2}, Han Cai\textsuperscript{2}\\
\normalfont\textsuperscript{1}Massachusetts Institute of Technology \qquad \textsuperscript{2}NVIDIA\\
\texttt{wyc557@mit.edu}\\
\url{https://github.com/jet-ai-projects/Lightning-Weave}
}

\ifpreprint
  \iclrfinalcopy % Show authors and disable review line numbers.
  \DeclareMathSizes{7.5}{7}{5}{5}
\fi
\begin{document}

\maketitle
\ifpreprint
  % The template sets its conference header inside \maketitle; override it here.
  \lhead{Preprint}
\fi
\begin{abstract}
A core goal of efficient reasoning is to improve the accuracy--efficiency
frontier.
However, jointly improving reasoning accuracy and inference efficiency can be
challenging, as the two objectives can favor different reasoning behaviors.
Independently post-trained models already offer distinct strengths in
accuracy and efficiency. We introduce \method, a post-training
framework that extracts and composes these independently learned
capabilities in a single student through on-policy distillation. Each acquired
capability is represented by the policy shift from the model before
post-training to the resulting specialist. \method combines aligned log-ratio
shifts at shared student token states and uses Tilted-Target DOPD to convert the
cached signals into a stable learning target. Each anchor pair scores the
cached trajectories once, enabling subsequent student training without serving
multiple live anchor models concurrently. Across
diverse student models and benchmarks in mathematics and code, \method
substantially improves upon the base students and achieves a state-of-the-art
accuracy--efficiency frontier. On
Qwen3.5-4B, it raises HMMT 2025 accuracy from 59.2\% to 64.0\% with 10.7\%
fewer response tokens, and LiveCodeBench v5 accuracy from 41.7\% to 54.2\%
with 9.6\% fewer response tokens. Adjusting the relative strengths of the
anchor signals yields a strong empirical accuracy--efficiency
Pareto frontier. These results establish \method as a new practical route to
efficient reasoning through capability composition. Our code is released at \url{https://github.com/jet-ai-projects/Lightning-Weave}.
\end{abstract}

\begin{figure}[H]
    \centering
    \includegraphics[width=\textwidth]{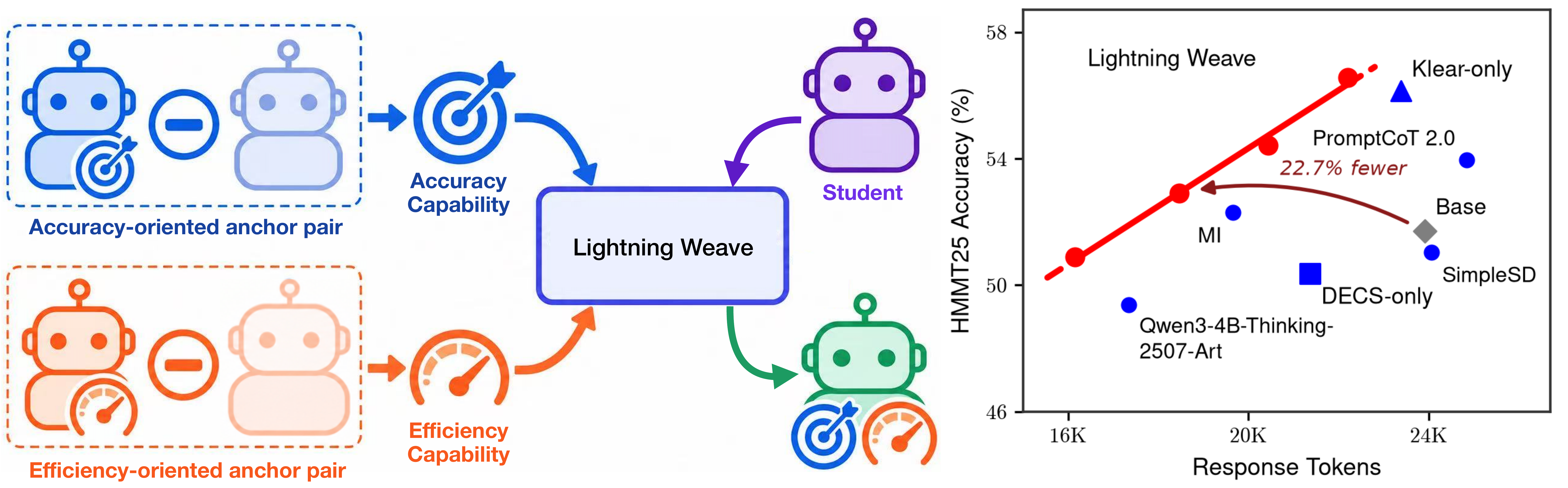}
    \caption{Left: Intuition of \method. Accuracy- and efficiency-oriented
    anchor pairs isolate complementary capabilities acquired through
    independent post-training. \method composes these capabilities into a
    joint target, transferring both to a single student and thereby improving
    its accuracy and efficiency simultaneously. Right: On
    Qwen3-4B-Thinking-2507, varying the relative strengths of the two
    capabilities yields a family of students that traces an empirical
    accuracy--efficiency Pareto frontier on HMMT 2025. The resulting frontier
    compares favorably with the base model, single-anchor policies, and
    existing efficient-reasoning methods.}
    \label{fig:teaser}
\end{figure}

\section{Introduction}
\label{sec:intro}

Large reasoning models have achieved strong performance on complex tasks,
but often generate lengthy reasoning traces that incur substantial inference
cost. Aggressively shortening these traces can compromise accuracy by
curtailing useful exploration. Efficient reasoning therefore aims to improve
the accuracy--efficiency frontier, achieving stronger performance for a given
inference budget. Existing approaches pursue this goal through length-aware
reinforcement learning and fine-tuning~\citep{jiang2026overthinking,liu2025dler,luo-etal-2026-o1},
training and distillation across reasoning modes~\citep{luo2026autol2s,liang2026orbit,ruan2026contrastive},
and merging deliberate and concise reasoning models in parameter
space~\citep{wu2025unlocking,yao2026activation,lan2025thinking}.

Directly optimizing accuracy and efficiency together can be challenging, as
the two objectives can favor different reasoning behaviors and improvements
in one may compromise the other. Meanwhile, independently post-trained models
already offer distinct strengths in reasoning accuracy~\citep{su2025klear}
and efficiency~\citep{jiang2026overthinking}. This suggests a complementary
route to efficient reasoning: extract what these specialists have already
learned and compose their capabilities in a single student through policy
distillation~\citep{rusu2015policy}.
The challenge is to reconcile their potentially competing effects on accuracy
and token use within the same reasoning task. This motivates our central question:
\emph{Can capabilities learned through independent post-training be extracted
and composed to improve a student's accuracy--efficiency frontier?}

We introduce \method, a capability-composition framework built on on-policy
distillation (OPD)~\citep{agarwal2024policy,gu2024minillm}. OPD supervises a
student on its own generated trajectories using a teacher's token-probability
distributions, and recent multi-teacher
approaches use this mechanism to integrate specialized
capabilities~\citep{ma2026mopd,yang2026nemotron,chen2026counteraction,gao2026open}.
To represent what each specialist has learned, we follow Direct On-Policy
Distillation (DOPD)~\citep{feng2026weak} and related policy-shift
formulations~\citep{yang2026learning,yu2026weak}. We pair each
specialist with its checkpoint before the relevant post-training stage,
forming an \emph{anchor pair}. The log-ratio of their token probabilities
represents the behavioral shift acquired during post-training. \method aligns
these shifts in the student's token space and combines them into one student
target, without requiring matching model parameters.

Composing multiple shifts online, however, makes supervision costly: with
$K$ anchor pairs, each new student rollout requires scoring by up to $2K$
anchor models. Lightning OPD~\citep{wu2026lightning} demonstrates an offline
pipeline that caches teacher feedback before student training, suggesting a
practical route to offline composition. Yet directly reusing DOPD's
sampled-token objective on cached trajectories does not generally preserve
the intended stationary target. As the student moves away from the behavior
policy that generated the cache, the regularizer estimated from cached
actions need not equal the intended KL penalty for the current student.
Consequently, even after the student reaches the intended shifted policy, the
cached surrogate can still produce a non-zero gradient and push the student
away from that solution.

We address this mismatch with Tilted-Target DOPD. Each cached shift tilts the
frozen behavior policy into an explicit target distribution, and training
minimizes the KL divergence from the current student to this target. For the
token-level DOPD surrogate, this construction preserves the expected initial
update and makes the loss and gradient vanish when the student matches the
target on cached states. The explicit target thus provides corrective feedback
as the student changes, giving offline capability transfer a well-defined
stationary solution.

Using this objective, \method composes the accuracy- and efficiency-oriented
shifts at every shared student token state. We first take a weighted sum of
the aligned log-ratios and then use it to construct one joint tilted target.
Composing shifts before target construction lets all capability sources
jointly supervise each student decision. Each anchor pair scores the same
cached trajectories once, after which optimization requires only the student
and cached scores. Adjusting the relative strengths of the anchor signals yields students with different accuracy--efficiency trade-offs.

We evaluate \method across diverse student models in separate mathematics
and code training settings, covering five benchmarks. Our primary setting
composes the accuracy-oriented shift from Klear~\citep{su2025klear} with the
efficiency-oriented shift from DECS~\citep{jiang2026overthinking}. Across the
fully evaluated students, composition improves the overall
accuracy--efficiency trade-off over the base and either single-anchor
alternative. On Qwen3-4B~\citep{yang2025qwen3}, \method raises AIME
2024~\citep{maa2026aime} accuracy from 72.9\% to 76.0\% while reducing
response tokens by 21.3\%. On Qwen3.5-4B~\citep{qwen3.5}, it raises HMMT
2025~\citep{hmmt2025february} accuracy from 59.2\% to 64.0\% with 10.7\%
fewer response tokens, and LiveCodeBench v5~\citep{jain2025livecodebench}
accuracy from 41.7\% to 54.2\% with 9.6\% fewer tokens.
Adjusting the relative strengths of the anchor signals during training yields
a family of students with a strong empirical accuracy--efficiency Pareto
frontier. These results support capability composition as a practical route
to more accurate and efficient reasoning.

\section{Related Work}
\label{sec:related}

\paragraph{Efficient Reasoning.}
Efficient reasoning improves the accuracy--efficiency frontier by controlling
inference computation. Existing approaches regulate token budgets and prune
low-confidence traces~\citep{han2025token,fu2025deepconf}, optimize length-aware
objectives~\citep{luo-etal-2026-o1,aggarwal2025l1,liu2025dler,jiang2026overthinking},
compress chain-of-thought traces~\citep{xia2025tokenskip}, or reason in
continuous representations~\citep{hao2025coconut,zhang2025softthinking}.
Others train or distill adaptive and budget-controlled reasoning
modes~\citep{luo2026autol2s,liang2026orbit,ruan2026contrastive}, or merge
deliberate and concise
checkpoints~\citep{wu2025unlocking,yao2026activation,lan2025thinking}.
Systems-oriented approaches include reward-guided and step-level
speculation~\citep{liao2025reward,pan2025specreason,fu2025lookahead},
certainty-guided allocation and scheduling in
Dynasor~\citep{fu2025certaindex}, and KV-cache compression in R-KV and
SkipKV~\citep{cai2025rkv,tian2026skipkv}.
These methods target generated tokens, latency, memory, and throughput.
\method targets the student's accuracy--efficiency frontier through
post-training capability composition, complementing inference-time systems
optimization.

\paragraph{Capability Composition.}
Capabilities can be integrated through policy distillation and knowledge
amalgamation~\citep{rusu2015policy,parisotto2015actor,teh2017distral,you2017learning,shen2019amalgamating,shen2019customizing},
heterogeneous output-space fusion~\citep{wan2024knowledge}, or parameter-space
model merging~\citep{li2022branch,wortsman2022model,matena2022merging,ilharco2022editing,yadav2023ties,yang2024adamerging,yu2024language}.
Multi-objective alignment studies reward-specialized interpolation and
controllable trade-offs~\citep{rame2023rewarded,yang2024rewards,zhong2024panacea,guo2024controllable},
while multi-teacher OPD integrates domain specialists and mitigates capability
interference~\citep{ma2026mopd,yang2026nemotron,chen2026counteraction,gao2026open}.
Rather than merging endpoint parameters or directly matching specialist
policies, \method composes aligned post-training policy shifts at shared
student token states. This decouples composition from parameter compatibility
while reconciling independently learned accuracy and efficiency capabilities
within the same reasoning task.

\paragraph{On-Policy Distillation.}
OPD provides dense teacher supervision on student-generated
trajectories~\citep{gu2024minillm,agarwal2024policy,lu2025onpolicydistillation,song2026survey}.
Recent work analyzes its failure
modes~\citep{li2026rethinking,fu2026revisiting,armandpour2026unmasking},
refines optimization~\citep{ko2026scaling,jin2026entropy,zhang2026reinforcement,wang2026not},
and enables cross-tokenizer transfer~\citep{he2026simpleopd,wang2026cross}.
DOPD and related formulations transfer policy shifts through teacher--reference
log-ratios or positive--negative model
contrasts~\citep{yang2026learning,feng2026weak,yu2026weak}.
Lightning OPD caches teacher feedback for offline training, while Lightning
OPD 2.0 corrects cross-teacher style
bias~\citep{wu2026lightning,wu2026lightning2}.
Building on policy-shift supervision and offline caching, Tilted-Target DOPD
provides an explicit, stable learning target on cached student states,
enabling capability composition without serving multiple live anchor models
during student training.

\section{Method}
\label{sec:method}

We develop \method to improve the accuracy--efficiency frontier by composing
independently learned capabilities in a single student.
As illustrated in Figure~\ref{fig:method-overview}, we represent each
capability as an anchor-pair policy shift, align these shifts at shared
student-generated token states, and compose them into a joint learning target.
We first review OPD and policy-shift supervision, then introduce
Tilted-Target DOPD to provide an explicit, stable target for offline
capability transfer, and finally build multi-anchor composition on this
foundation.

\begin{figure*}[t]
    \centering
    \includegraphics[width=\textwidth]{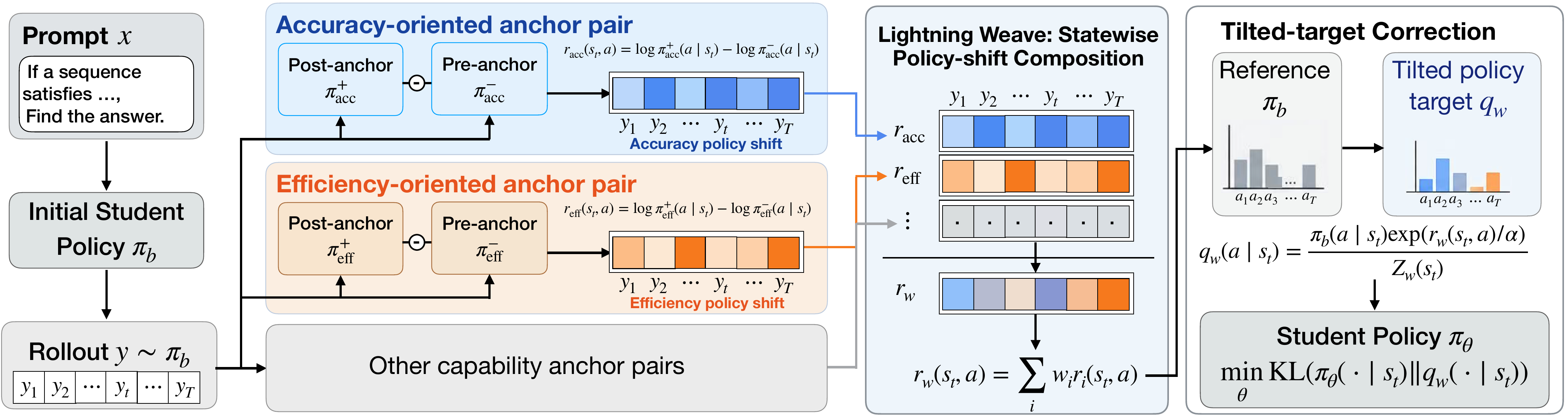}
    \caption{\textbf{Overview of \method.}
    (1) \textbf{Capability-specific policy shifts.} Given a prompt, the frozen
    behavior policy generates a rollout whose token states are shared by all
    independently trained anchor pairs. The log-ratio between each post-trained
    anchor and its pre-anchor isolates the corresponding capability-specific
    policy shift. (2) \textbf{Statewise multi-anchor composition.} \method
    combines the weighted shifts at every token state, allowing compatible
    changes to reinforce one another while reconciling competing changes
    locally. (3) \textbf{Tilted-target distillation.} The composed shift
    exponentially tilts the behavior policy into an explicit target
    distribution, which the student learns through KL minimization.}
    \label{fig:method-overview}
\end{figure*}

\subsection{Preliminaries}
\label{sec:preliminaries}

Let $x\sim\mathcal{D}$ be a prompt, $\hat{y}\sim\pi_\theta(\cdot\mid x)$ a student response, and $s_t=(x,\hat{y}_{<t})$ its token-level state. Given a teacher $\pi_T$, standard OPD minimizes the KL to the teacher on student-generated states~\citep{agarwal2024policy}:
\begin{equation}
    \mathcal{L}_{\mathrm{OPD}}(\theta)
    =
    \mathbb{E}_{x\sim\mathcal{D},\,\hat{y}\sim\pi_\theta}
    \left[
        \sum_t
        D_{\mathrm{KL}}
        \bigl(
            \pi_\theta(\cdot\mid s_t)
            \,\|\,\pi_T(\cdot\mid s_t)
        \bigr)
    \right].
    \label{eq:opd-objective}
\end{equation}

Direct On-Policy Distillation (DOPD)~\citep{feng2026weak} instead represents a specialist with a post-trained anchor and its pre-anchor, denoted by $(\pi^+,\pi^-)$. The sequence-level policy shift decomposes into dense token-level rewards:
\begin{equation}
    \Delta(\hat{y}\mid x)
    =
    \log\frac{\pi^+(\hat{y}\mid x)}{\pi^-(\hat{y}\mid x)}
    =
    \sum_t r_\Delta(\hat{y}_t\mid s_t),
    \qquad
    r_\Delta(a\mid s)
    =
    \log\frac{\pi^+(a\mid s)}{\pi^-(a\mid s)}.
    \label{eq:policy-shift}
\end{equation}
This ratio isolates the behavioral change introduced by post-training rather than the endpoint policy $\pi^+$. Let $\pi_b$ denote the initial student. In its idealized sequence-level form, DOPD maximizes
\begin{equation}
    \mathcal{J}_{\mathrm{DOPD}}(\theta)
    =
    \mathbb{E}_{x\sim\mathcal{D}}
    \left[
        \mathbb{E}_{\hat{y}\sim\pi_\theta(\cdot\mid x)}
        [\Delta(\hat{y}\mid x)]
        -
        \alpha D_{\mathrm{KL}}
        \bigl(
            \pi_\theta(\cdot\mid x)
            \,\|\,\pi_b(\cdot\mid x)
        \bigr)
    \right],
    \label{eq:dopd-objective}
\end{equation}
where $\alpha>0$ controls the KL penalty. Following DOPD's zero-discount token-level surrogate, we use $r_\Delta(a\mid s)$ as the immediate reward at each visited state.

\subsection{Single-Anchor Distillation: Tilted-Target DOPD}
\label{sec:single-anchor}

\paragraph{Offline DOPD and its limitation.}
Online DOPD must serve both anchors to evaluate $r_\Delta$ on the states visited by the current student. Inspired by offline score caching~\citep{wu2026lightning}, we instead sample trajectories once from $\pi_b$ and cache their anchor shifts. Let $d_{\pi_b}$ denote the resulting token-state distribution. A naive cached surrogate for Eq.~\ref{eq:dopd-objective} is
\begin{equation}
    \mathcal{J}_{\mathrm{naive}}(\theta)
    =
    \mathbb{E}_{s\sim d_{\pi_b}}
    \left[
        \mathbb{E}_{a\sim\pi_\theta(\cdot\mid s)}
        [r_\Delta(a\mid s)]
        -
        \alpha\widehat{D}_{\mathrm{KL}}^{\,b}
        \bigl(
            \pi_\theta(\cdot\mid s)
            \,\|\,\pi_b(\cdot\mid s)
        \bigr)
    \right],
    \label{eq:naive-offline-dopd}
\end{equation}
where $\widehat{D}_{\mathrm{KL}}^{\,b}$ denotes DOPD's low-variance sampled-token KL estimator evaluated on a cached action $a_b\sim\pi_b(\cdot\mid s)$. In particular, $\widehat{D}_{\mathrm{KL}}^{\,b}=\exp(\ell_\theta)-1-\ell_\theta$, where $\ell_\theta=\log\pi_b(a_b\mid s)-\log\pi_\theta(a_b\mid s)$. At initialization, the cached states and actions are on-policy, so Eq.~\ref{eq:naive-offline-dopd} matches the online DOPD update. As $\pi_\theta$ departs from $\pi_b$, however, cached actions no longer yield an on-policy estimate of the KL regularizer. The regularization is therefore not guaranteed to cancel the fixed policy-shift update at the intended solution, allowing training to continue pushing after the shift has been absorbed. The cached states remain an offline approximation; our correction instead restores a well-defined fixed point on these states.

\paragraph{Tilted-Target DOPD.}
We make the destination of the regularized update explicit. Using the full-distribution form, each cached state $s\sim d_{\pi_b}$ defines
\begin{align}
    q(\cdot\mid s)
    &=
    \arg\max_{\pi}
    \left\{
        \mathbb{E}_{a\sim\pi(\cdot\mid s)}
        [r_\Delta(a\mid s)]
        -
        \alpha D_{\mathrm{KL}}
        \bigl(
            \pi(\cdot\mid s)
            \,\|\,\pi_b(\cdot\mid s)
        \bigr)
    \right\},
    \nonumber\\
    q(a\mid s)
    &=
    \frac{
        \pi_b(a\mid s)
        \exp\!\left(r_\Delta(a\mid s)/\alpha\right)
    }{Z(s)}.
    \label{eq:tilted-target}
\end{align}
Here $Z(s)=\sum_a\pi_b(a\mid s)\exp(r_\Delta(a\mid s)/\alpha)$ normalizes the distribution over actions. A larger $\alpha$ keeps $q$ closer to $\pi_b$, while a smaller $\alpha$ applies a stronger anchor shift. We then use the same student-to-target KL form as OPD, evaluated on the cached states:
\begin{equation}
    \mathcal{L}_{\mathrm{TT}}(\theta)
    =
    \alpha\,
    \mathbb{E}_{s\sim d_{\pi_b}}
    \left[
        D_{\mathrm{KL}}
        \bigl(
            \pi_\theta(\cdot\mid s)
            \,\|\,q(\cdot\mid s)
        \bigr)
    \right].
    \label{eq:tilted-target-loss}
\end{equation}
Substituting Eq.~\ref{eq:tilted-target} gives
\begin{equation}
    \alpha D_{\mathrm{KL}}(\pi_\theta\,\|\,q)
    =
    \alpha D_{\mathrm{KL}}(\pi_\theta\,\|\,\pi_b)
    -
    \mathbb{E}_{a\sim\pi_\theta}
    [r_\Delta(a\mid s)]
    +
    \alpha\log Z(s).
    \label{eq:tilted-target-equivalence}
\end{equation}
Because $\log Z(s)$ is independent of $\theta$,
Eq.~\ref{eq:tilted-target-loss} is equivalent to the full-distribution,
KL-regularized DOPD objective on each cached state. It preserves the expected
per-state online update at initialization and, unlike the naive surrogate, has
an explicit fixed point: the loss and gradient vanish when $\pi_\theta=q$.
Repeated optimization therefore approaches a well-defined policy target rather
than continually extrapolating the cached shift. This self-correcting behavior
stabilizes offline training.

\subsection{Multi-Anchor Distillation: \method}
\label{sec:multi-anchor}

\paragraph{Aligned policy-shift composition.}
Having established stable transfer from one anchor pair, we now seek a single
student target that realizes the changes acquired by $K$ independently trained
specialists. The $i$-th pair $(\pi_i^+,\pi_i^-)$ induces the token-level shift
$r_i(a\mid s)=\log\frac{\pi_i^+(a\mid s)}{\pi_i^-(a\mid s)}$.
We evaluate every pair on the same states cached from $\pi_b$ and express its scores over a common student action space. At each student decision, the resulting $\{r_i(a\mid s)\}_{i=1}^{K}$ therefore describe how different post-training procedures reweight the same action distribution. Because these changes are expressed as log-density ratios, their weighted composition is
\begin{equation}
    r_{\mathbf{w}}(a\mid s)
    =
    \sum_{i=1}^{K}w_i r_i(a\mid s)
    =
    \log
    \prod_{i=1}^{K}
    \left(
        \frac{\pi_i^+(a\mid s)}{\pi_i^-(a\mid s)}
    \right)^{w_i},
    \label{eq:composed-shift}
\end{equation}
where $\mathbf{w}=(w_1,\ldots,w_K)$ contains non-negative capability weights
that satisfy $\sum_{i=1}^{K}w_i=1$. Their values determine the balance among
capabilities, and we use $w_i=1/K$ for equal composition. With the same
$\alpha$ and optimization configuration, setting one weight to one and all
others to zero exactly recovers the corresponding single-anchor objective.

\paragraph{\method.}
We apply the composed shift once to the common behavior policy, defining
\begin{equation}
    q_{\mathbf{w}}(\cdot\mid s)
    =
    \arg\max_{\pi}
    \left\{
        \mathbb{E}_{a\sim\pi(\cdot\mid s)}
        [r_{\mathbf{w}}(a\mid s)]
        -
        \alpha D_{\mathrm{KL}}
        \bigl(
            \pi(\cdot\mid s)
            \,\|\,\pi_b(\cdot\mid s)
        \bigr)
    \right\}.
    \label{eq:maps-objective}
\end{equation}
The objective is strictly concave in $\pi$ and has the unique solution
\begin{equation}
    q_{\mathbf{w}}(a\mid s)
    =
    \frac{\pi_b(a\mid s)}{Z_{\mathbf{w}}(s)}
    \exp\!\left(
        \frac{1}{\alpha}
        \sum_{i=1}^{K}w_i r_i(a\mid s)
    \right)
    =
    \frac{\pi_b(a\mid s)}{Z_{\mathbf{w}}(s)}
    \prod_{i=1}^{K}
    \left(
        \frac{\pi_i^+(a\mid s)}{\pi_i^-(a\mid s)}
    \right)^{w_i/\alpha}.
    \label{eq:maps-target}
\end{equation}
Here $Z_{\mathbf{w}}(s)=\sum_a\pi_b(a\mid s)\exp(r_{\mathbf{w}}(a\mid s)/\alpha)$ normalizes $q_{\mathbf{w}}$ over the common student action space. Equation~\ref{eq:maps-target} is a product of relative policy changes around one shared student prior, not an arithmetic mixture of teacher distributions. Anchor pairs that support the same action reinforce one another, while conflicting changes are reconciled locally through their weighted ratios. In the sample-mixing baseline, each example is supervised by only one anchor-specific target, leaving their interaction implicit in shared model parameters; our construction specifies that interaction directly at every aligned token state.
We then fit the student to this joint target using
\begin{equation}
    \mathcal{L}_{\mathrm{joint}}(\theta;\mathbf{w})
    =
    \alpha\,
    \mathbb{E}_{s\sim d_{\pi_b}}
    \left[
        D_{\mathrm{KL}}
        \bigl(
            \pi_\theta(\cdot\mid s)
            \,\|\,q_{\mathbf{w}}(\cdot\mid s)
        \bigr)
    \right].
    \label{eq:maps-loss}
\end{equation}
Expanding this KL shows that, up to a $\theta$-independent constant, the
objective combines the weighted DOPD directions under a single KL constraint,
while its loss and gradient vanish at $q_{\mathbf{w}}$.
Equation~\ref{eq:maps-loss} recovers
Tilted-Target DOPD when $K=1$. Each anchor pair is scored only once; after the
aligned shifts are cached, training serves only the student regardless of the
number of anchors. Complete derivations are provided in
Appendix~\ref{app:theory}, and practical finite-target approximation and
cross-tokenizer alignment are described in
Appendix~\ref{app:cross-tokenizer}.

\section{Experiments}
\label{sec:experiments}

\begin{table}[H]
\centering
\caption{Main accuracy--efficiency results across mathematics and code. We report accuracy (Acc.; \%) and mean response tokens (\#Tok.) for each benchmark, together with average accuracy, response tokens, and AES over all five benchmarks. All \method rows compose the Klear and DECS policy shifts. Within each student block, the best value for each metric is marked in \textbf{bold} and the second-best is \underline{underlined}.}
\label{tab:main-results}
\resizebox{\textwidth}{!}{%
\begin{tabular}{lccccccccccccc}
\toprule
\multirow{2}[4]{*}{\textbf{Policy}} &
    \multicolumn{2}{c}{\textbf{AIME 2024}} &
    \multicolumn{2}{c}{\textbf{AIME 2025}} &
    \multicolumn{2}{c}{\textbf{HMMT 2025}} &
    \multicolumn{2}{c}{\textbf{LCB v5}} &
    \multicolumn{2}{c}{\textbf{LCB v6}} &
    \multicolumn{3}{c}{\textbf{Average}} \\
\cmidrule(r){2-3}\cmidrule(r){4-5}\cmidrule(r){6-7}\cmidrule(r){8-9}\cmidrule(r){10-11}\cmidrule(l){12-14}
& \textbf{Acc.}$\uparrow$ & \textbf{\#Tok.}$\downarrow$
& \textbf{Acc.}$\uparrow$ & \textbf{\#Tok.}$\downarrow$
& \textbf{Acc.}$\uparrow$ & \textbf{\#Tok.}$\downarrow$
& \textbf{Acc.}$\uparrow$ & \textbf{\#Tok.}$\downarrow$
& \textbf{Acc.}$\uparrow$ & \textbf{\#Tok.}$\downarrow$
& \textbf{Acc.}$\uparrow$ & \textbf{\#Tok.}$\downarrow$ & \textbf{AES}$\uparrow$ \\
\midrule
\multicolumn{14}{l}{\textbf{\textit{Qwen3-1.7B}}} \\
Base
    & 45.8 & 17,564 & 40.4 & 17,537 & 21.7 & 17,570
    & 35.1 & 14,796 & 30.3 & 14,409
    & 34.66 & 16,375 & 0.00 \\
Klear-only
    & \underline{53.2} & 18,457 & \textbf{41.7} & 19,178 & \textbf{24.1} & 19,751
    & \textbf{38.3} & 15,131 & \underline{31.9} & 15,198
    & \underline{37.84} & 17,543 & \underline{0.20} \\
DECS-only
    & 46.2 & \textbf{14,558} & 34.9 & \textbf{14,534} & 20.9 & \textbf{14,160}
    & 33.1 & \textbf{13,406} & 30.5 & \textbf{13,039}
    & 33.12 & \textbf{13,939} & $-0.08$ \\
\rowcolor[rgb]{.867,.922,.969}
\method
    & \textbf{55.1} & \underline{15,289} & \underline{40.8} & \underline{16,151} & \underline{22.9} & \underline{15,855}
    & \underline{38.1} & \underline{14,284} & \textbf{32.8} & \underline{13,680}
    & \textbf{37.94} & \underline{15,052} & \textbf{0.34} \\
\midrule
\multicolumn{14}{l}{\textbf{\textit{Qwen3-4B}}} \\
Base
    & 72.9 & 14,641 & 65.4 & 17,844 & 41.4 & 17,626
    & 53.0 & 13,843 & 46.8 & 14,663
    & 55.90 & 15,723 & 0.00 \\
Klear-only
    & \underline{75.8} & 15,061 & \textbf{71.3} & 17,715 & \textbf{45.6} & 18,230
    & \underline{57.4} & 13,252 & \underline{49.6} & 13,776
    & \textbf{59.94} & 15,607 & \underline{0.23} \\
DECS-only
    & 68.3 & \textbf{10,918} & 60.6 & \textbf{12,429} & 35.0 & \textbf{11,862}
    & 53.0 & \underline{12,531} & 46.2 & \underline{13,437}
    & 52.62 & \textbf{12,235} & $-0.09$ \\
\rowcolor[rgb]{.867,.922,.969}
\method
    & \textbf{76.0} & \underline{11,521} & \underline{70.1} & \underline{13,816} & \underline{43.0} & \underline{13,743}
    & \textbf{59.2} & \textbf{11,702} & \textbf{51.3} & \textbf{12,357}
    & \underline{59.92} & \underline{12,628} & \textbf{0.41} \\
\midrule
\multicolumn{14}{l}{\textbf{\textit{Qwen3-4B-Thinking-2507}}} \\
Base
    & 77.9 & 19,442 & 73.9 & 21,170 & 51.7 & 23,906
    & 65.5 & \underline{16,693} & 53.4 & \underline{18,105}
    & 64.47 & 19,863 & 0.00 \\
Klear-only
    & \underline{83.0} & 18,128 & \underline{78.5} & 19,792 & \textbf{56.1} & 23,377
    & \underline{66.4} & 17,853 & \textbf{56.7} & 19,487
    & \underline{68.16} & 19,727 & \underline{0.18} \\
DECS-only
    & 79.0 & \underline{17,680} & 75.5 & \underline{19,460} & 50.4 & \underline{21,369}
    & 63.9 & \textbf{16,004} & 53.2 & \textbf{17,390}
    & 64.40 & \underline{18,381} & 0.04 \\
\rowcolor[rgb]{.867,.922,.969}
\method
    & \textbf{85.3} & \textbf{16,003} & \textbf{80.4} & \textbf{17,970} & \underline{54.4} & \textbf{20,440}
    & \textbf{68.1} & 17,216 & \underline{56.5} & 18,790
    & \textbf{68.94} & \textbf{18,084} & \textbf{0.28} \\
\midrule
\multicolumn{14}{l}{\textbf{\textit{Qwen3.5-4B}}} \\
Base
    & 72.9 & \underline{18,316} & 63.5 & \underline{20,322} & \underline{59.2} & 26,481
    & 41.7 & 31,287 & 38.5 & 30,789
    & 55.16 & 25,439 & 0.00 \\
Klear-only
    & \underline{78.2} & 21,522 & \underline{66.7} & 23,346 & 56.7 & 27,096
    & 51.2 & 29,891 & \underline{46.8} & 29,638
    & \underline{59.92} & 26,299 & \underline{0.25} \\
DECS-only
    & 64.2 & \textbf{15,052} & 55.8 & \textbf{17,335} & 52.1 & \textbf{22,228}
    & \underline{52.4} & \textbf{26,549} & 45.8 & \textbf{27,091}
    & 54.06 & \textbf{21,651} & 0.06 \\
\rowcolor[rgb]{.867,.922,.969}
\method
    & \textbf{84.2} & 18,900 & \textbf{74.6} & 20,758 & \textbf{64.0} & \underline{23,635}
    & \textbf{54.2} & \underline{28,294} & \textbf{47.3} & \underline{28,841}
    & \textbf{64.86} & \underline{24,086} & \textbf{0.61} \\
\midrule
\multicolumn{14}{l}{\textbf{\textit{OLMo-3-7B-Think}}} \\
Base
    & 73.5 & 17,521 & 65.3 & 19,247 & 44.4 & 20,960
    & 52.1 & \underline{22,377} & 46.6 & \underline{22,679}
    & 56.38 & 20,557 & 0.00 \\
Klear-only
    & \underline{77.6} & 16,787 & \textbf{72.3} & 18,553 & \textbf{50.3} & 23,976
    & \underline{55.0} & 23,794 & 47.5 & 24,677
    & \textbf{60.54} & 21,557 & \underline{0.18} \\
DECS-only
    & 70.1 & \underline{13,497} & 61.9 & \textbf{14,528} & 39.2 & \textbf{16,076}
    & 52.2 & \textbf{20,857} & \underline{47.7} & \textbf{21,673}
    & 54.22 & \textbf{17,326} & $-0.04$ \\
\rowcolor[rgb]{.867,.922,.969}
\method
    & \textbf{77.9} & \textbf{13,196} & \underline{69.2} & \underline{14,545} & \underline{46.1} & \underline{16,487}
    & \textbf{58.0} & 22,404 & \textbf{49.4} & 22,873
    & \underline{60.12} & \underline{17,901} & \textbf{0.34} \\
\bottomrule
\end{tabular}%
}
\end{table}

\subsection{Experimental Setup}
\label{sec:experimental-setup}

\paragraph{Models and anchor pairs.}
We conduct experiments on five student models: Qwen3-1.7B, Qwen3-4B, Qwen3-4B-Thinking-2507~\citep{yang2025qwen3}, Qwen3.5-4B~\citep{qwen3.5}, and OLMo-3-7B-Think~\citep{olmo2025olmo}, spanning multiple model sizes and families. We represent each capability using the shift from a pre-anchor to its post-trained counterpart. Our accuracy-oriented anchor pair is Qwen3-8B-Base $\rightarrow$ Klear-Reasoner-8B (Klear)~\citep{su2025klear}, and our efficiency-oriented anchor pair is DeepSeek-R1-Distill-Qwen-1.5B $\rightarrow$ DECS-1.5B (DECS)~\citep{jiang2026overthinking}. We evaluate both single-anchor shifts and their Klear--DECS composition for every student. Section~\ref{sec:anchor-generalization} further extends this study to MiMo-RL as an additional anchor.

\paragraph{Training data.}
For mathematics, we sample 3,200 prompts from the math split of Skywork-OR1-RL-Data~\citep{he2025skywork}, converted to the DAPO prompt format~\citep{yu2026dapo}. For code, we sample 3,200 prompts from KlearReasoner-CodeSub-15K~\citep{su2025klear} after length filtering and prompt deduplication. Each student independently generates four responses per prompt, with a maximum response length of 2,048 tokens. All anchor pairs score the same student trajectories. When an anchor and student use different tokenizers, we project the anchor scores into the student token space through cross-tokenizer alignment; implementation details are provided in Appendix~\ref{app:cross-tokenizer}.

\paragraph{Benchmarks and metrics.}
We evaluate mathematical reasoning on AIME 2024, AIME 2025~\citep{maa2026aime}, and HMMT February 2025~\citep{hmmt2025february}, and code generation on LiveCodeBench (LCB) v5 and v6~\citep{jain2025livecodebench}. For each math problem, we sample 64 responses at temperature $0.6$ and top-$p=0.95$, with a maximum of 32,768 generated tokens, and report mean answer accuracy over all samples. For each LiveCodeBench problem, we similarly sample four responses at temperature $0.6$ and top-$p=0.95$, with a 40,960-token limit, and report their mean accuracy (Avg@4). Alongside accuracy, we report the mean number of generated response tokens as a measure of inference efficiency. We also report the Accuracy--Efficiency Score (AES)~\citep{luo-etal-2026-o1}, computed against the corresponding base model per benchmark and then averaged equally across all five benchmarks.

\paragraph{Training settings.}
We optimize over the cached trajectories using Adam with a global batch size
of 64 and a constant learning rate of $1.0{\times}10^{-6}$. We use
$\alpha=2.0$ for both single- and multi-anchor training. Multi-anchor weights
are normalized to sum to one; equal two-anchor composition assigns a weight of
$0.5$ to each anchor. Additional hyperparameters are provided in
Appendix~\ref{app:training-hyperparameters}.

\subsection{Main Results}
\label{sec:main-results}

Table~\ref{tab:main-results} reports evaluation results for five student models across five math and code benchmarks. Compared with both the base models and single-anchor policies, \method consistently improves the balance between accuracy and efficiency across model sizes and families. On Qwen3-4B, it improves average accuracy over the base model by 4.02 points while reducing response tokens by 19.7\%. On Qwen3-4B-Thinking-2507, \method improves average accuracy by 4.5 points while reducing response tokens by 9.0\%, outperforming both single-anchor policies in both aggregate metrics. On Qwen3.5-4B, \method yields a substantially larger accuracy gain of 9.70 points while still reducing response tokens by 5.3\%. In particular, its HMMT 2025 accuracy increases from 59.2\% to 64.0\%, reaching a level competitive with state-of-the-art models of a similar scale while using 10.7\% fewer tokens. Figure~\ref{fig:accuracy-efficiency-tradeoff}(a) further shows that \method achieves the highest AES and accuracy on all five benchmarks for Qwen3.5-4B. The gains also transfer to OLMo-3-7B-Think, where \method improves average accuracy by 3.74 points and reduces response tokens by 12.9\%. Across all five backbones, \method achieves the highest AES, substantially outperforming both single-anchor policies and delivering a consistently stronger accuracy--efficiency trade-off. Together, these results establish capability composition as a consistent route to improving the accuracy--efficiency frontier across model scales and families.

\subsection{Accuracy--Efficiency Frontiers}
\label{sec:accuracy-efficiency-frontiers}

To assess controllability, we sweep the relative Klear--DECS weights on
Qwen3-4B-Thinking-2507 under a fixed training budget.
Table~\ref{tab:q4-thinking-frontier} reports five interior compositions and
representative baselines: SimpleSD~\citep{zhang2026embarrassinglysimpleselfdistillationimproves},
Qwen3-4B-Thinking-2507-Art~\citep{wu2026artefficientreasoningdata},
PromptCoT 2.0~\citep{zhao2025promptcot20scalingprompt}, and model interpolation
(MI)~\citep{wu2026revisitingmodelinterpolationefficient}.

\begin{wraptable}{R}{0.66\textwidth}
\vspace{-2.0\baselineskip}
\centering
\caption{Accuracy--efficiency frontiers on
Qwen3-4B-Thinking-2507. We report the Klear--DECS sweep and representative
baselines on three mathematical reasoning benchmarks. The non-dominated
\method operating points trace an empirical Pareto frontier across all three
benchmarks.}
\vspace{0.5\baselineskip}
\label{tab:q4-thinking-frontier}
\resizebox{\linewidth}{!}{%
\begin{tabular}{lcccccc}
\toprule
\multirow{2}[4]{*}{\textbf{Policy}} &
    \multicolumn{2}{c}{\textbf{AIME 2024}} &
    \multicolumn{2}{c}{\textbf{AIME 2025}} &
    \multicolumn{2}{c}{\textbf{HMMT 2025}} \\
\cmidrule(r){2-3}\cmidrule(r){4-5}\cmidrule(l){6-7}
& \textbf{Acc.}$\uparrow$ & \textbf{\#Tok.}$\downarrow$
& \textbf{Acc.}$\uparrow$ & \textbf{\#Tok.}$\downarrow$
& \textbf{Acc.}$\uparrow$ & \textbf{\#Tok.}$\downarrow$ \\
\midrule
Base & 77.9 & 19,442 & 73.9 & 21,170 & 51.7 & 23,906 \\
Klear-only & 83.0 & 18,128 & 78.5 & 19,792 & 56.1 & 23,377 \\
DECS-only & 79.0 & 17,680 & 75.5 & 19,460 & 50.4 & 21,369 \\
\midrule
\multicolumn{7}{l}{\textit{Klear--DECS composition}} \\
\quad $\rho_{\mathrm{DECS}}=0.250$ & 83.8 & 17,474 & 78.5 & 19,300 & 55.6 & 22,774 \\
\quad $\rho_{\mathrm{DECS}}=0.375$ & 85.1 & 17,038 & 79.3 & 18,760 & 56.6 & 22,197 \\
\quad $\rho_{\mathrm{DECS}}=0.500$ & 85.3 & 16,003 & 80.4 & 17,970 & 54.4 & 20,440 \\
\quad $\rho_{\mathrm{DECS}}=0.625$ & 84.9 & 15,048 & 80.4 & 16,963 & 52.9 & 18,470 \\
\quad $\rho_{\mathrm{DECS}}=0.750$ & 83.3 & 14,310 & 79.6 & 16,409 & 50.9 & 16,147 \\
\midrule
\multicolumn{7}{l}{\textit{Additional baselines}} \\
SimpleSD & 76.7 & 19,473 & 75.2 & 21,157 & 51.0 & 24,053 \\
Art & 78.5 & 15,456 & 76.3 & 17,703 & 49.4 & 17,339 \\
PromptCoT 2.0 & 75.8 & 20,177 & 76.9 & 21,455 & 54.0 & 24,838 \\
MI ($\lambda=0.8$) & 83.1 & 15,885 & 77.5 & 17,542 & 52.3 & 19,653 \\
\bottomrule
\end{tabular}%
}
\vspace{-1.0\baselineskip}
\end{wraptable}

Responses generally
shorten as the DECS fraction increases, while intermediate compositions preserve
or improve accuracy. On AIME 2024, for example,
$\rho_{\mathrm{DECS}}=0.625$ achieves 84.9\% accuracy with 15,048 tokens,
exceeding every external baseline and both single-anchor policies in accuracy
while using fewer tokens. The non-dominated \method policies consequently
trace an empirical Pareto frontier in
Figure~\ref{fig:accuracy-efficiency-tradeoff}(b).
In Figure~\ref{fig:accuracy-efficiency-tradeoff}(c), we evaluate AIME 2024
accuracy under different token budgets. \method outperforms both single-anchor
policies at every budget, with the largest gains under tight budgets.
Together, these results establish a controllable
accuracy--efficiency frontier whose gains persist across metrics and inference
budgets.

\begin{figure*}[!t]
\centering
\begin{minipage}[t]{0.27\textwidth}
\vspace{0pt}
\centering
\includegraphics[width=\linewidth]{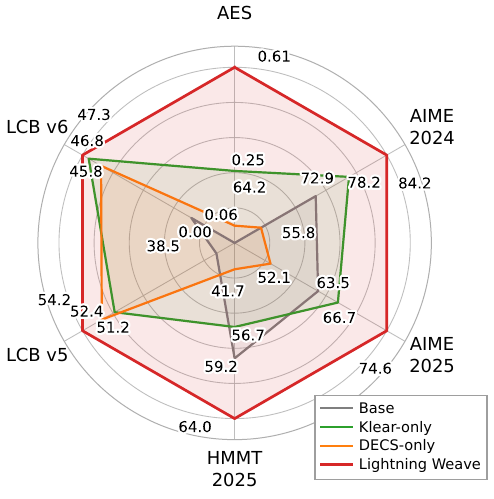}
\par\textnormal{(a)}
\end{minipage}\hfill
\begin{minipage}[t]{0.388\textwidth}
\vspace{0pt}
\centering
\includegraphics[width=\linewidth]{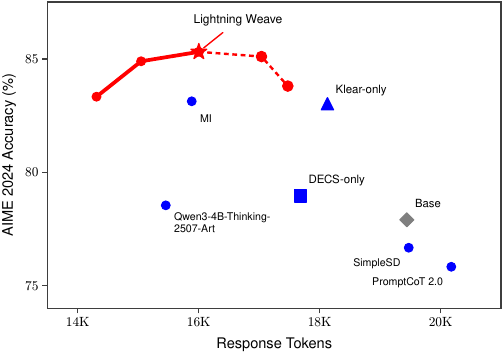}
\par\textnormal{(b)}
\end{minipage}\hfill
\begin{minipage}[t]{0.326\textwidth}
\vspace{0pt}
\centering
\includegraphics[width=\linewidth]{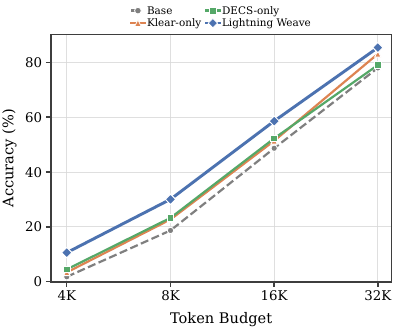}
\par\textnormal{(c)}
\end{minipage}
\caption{(a): The performance of \method and baseline policies on Qwen3.5-4B
across AES and all five benchmarks; \method achieves the strongest overall
capability profile. (b): Accuracy--efficiency trade-off of \method and the
baselines on AIME 2024 with Qwen3-4B-Thinking-2507. Varying the anchor
weights yields a family of \method policies whose non-dominated operating
points trace a Pareto frontier against the efficient-reasoning baselines. The
solid red line connects these Pareto-optimal points, the dashed red line
connects the remaining weight-sweep settings, and the star marks the
highest-accuracy setting. (c): The
performance of the Base, Klear-only, DECS-only, and
\method policies on AIME 2024 under different token budgets;
\method performs best at every budget.}
\label{fig:accuracy-efficiency-tradeoff}
\end{figure*}

\subsection{Generalization to Additional Anchor Combinations}
\label{sec:anchor-generalization}

\begin{table}[!htbp]
\centering
\caption{Generalization to additional anchor combinations on Qwen3-4B. We
report accuracy (Acc.; \%) and mean response tokens (\#Tok.) for each
benchmark, together with average accuracy, response tokens, and AES.
Multi-anchor compositions use equal normalized weights. The best value for each metric
is marked in \textbf{bold} and the second-best is \underline{underlined}.}
\label{tab:anchor-generalization}
\resizebox{\textwidth}{!}{%
\begin{tabular}{lccccccccccccc}
\toprule
\multirow{2}[4]{*}{\textbf{Policy}} &
    \multicolumn{2}{c}{\textbf{AIME 2024}} &
    \multicolumn{2}{c}{\textbf{AIME 2025}} &
    \multicolumn{2}{c}{\textbf{HMMT 2025}} &
    \multicolumn{2}{c}{\textbf{LCB v5}} &
    \multicolumn{2}{c}{\textbf{LCB v6}} &
    \multicolumn{3}{c}{\textbf{Average}} \\
\cmidrule(r){2-3}\cmidrule(r){4-5}\cmidrule(r){6-7}\cmidrule(r){8-9}\cmidrule(r){10-11}\cmidrule(l){12-14}
& \textbf{Acc.}$\uparrow$ & \textbf{\#Tok.}$\downarrow$
& \textbf{Acc.}$\uparrow$ & \textbf{\#Tok.}$\downarrow$
& \textbf{Acc.}$\uparrow$ & \textbf{\#Tok.}$\downarrow$
& \textbf{Acc.}$\uparrow$ & \textbf{\#Tok.}$\downarrow$
& \textbf{Acc.}$\uparrow$ & \textbf{\#Tok.}$\downarrow$
& \textbf{Acc.}$\uparrow$ & \textbf{\#Tok.}$\downarrow$ & \textbf{AES}$\uparrow$ \\
\midrule
Base
    & 72.9 & 14,641 & 65.4 & 17,844
    & 41.4 & 17,626 & 53.0 & 13,843
    & 46.8 & 14,663
    & 55.90 & 15,723 & 0.00 \\
\midrule
\multicolumn{14}{l}{\textbf{\textit{Single anchor}}} \\
Klear-only
    & 75.8 & 15,061 & 71.3 & 17,715
    & 45.6 & 18,230 & 57.4 & 13,252
    & 49.6 & 13,776
    & 59.94 & 15,607 & 0.23 \\
DECS-only
    & 68.3 & \textbf{10,918} & 60.6 & \textbf{12,429}
    & 35.0 & \textbf{11,862} & 53.0 & \underline{12,531}
    & 46.2 & 13,437
    & 52.62 & \textbf{12,235} & $-0.09$ \\
MiMo-RL
    & \underline{76.0} & 16,646 & 71.3 & 19,580
    & 45.8 & 19,879 & 57.0 & 14,383
    & 47.9 & 14,939
    & 59.60 & 17,085 & 0.12 \\
\midrule
\multicolumn{14}{l}{\textbf{\textit{Two anchors}}} \\
Klear+DECS
    & \underline{76.0} & \underline{11,521}
    & 70.1 & \underline{13,816}
    & 43.0 & \underline{13,743}
    & \underline{59.2} & \textbf{11,702}
    & 51.3 & \textbf{12,357}
    & 59.92 & \underline{12,628} & \underline{0.41} \\
Klear+MiMo-RL
    & \textbf{78.7} & 19,031 & \textbf{74.9} & 21,355
    & \textbf{50.5} & 22,328 & \textbf{63.4} & 14,778
    & \textbf{54.4} & 15,395
    & \textbf{64.38} & 18,577 & 0.31 \\
DECS+MiMo-RL
    & 74.2 & 12,966 & 66.5 & 15,336
    & 40.9 & 15,144 & 56.5 & 12,915
    & 48.3 & \underline{13,333}
    & 57.28 & 13,939 & 0.18 \\
\midrule
\multicolumn{14}{l}{\textbf{\textit{Three anchors}}} \\
\rowcolor[rgb]{.867,.922,.969}
Klear+DECS+MiMo-RL
    & \textbf{78.7} & 14,268
    & \underline{73.6} & 16,838
    & \underline{47.5} & 16,731
    & \underline{59.2} & 12,930
    & \underline{53.4} & 13,609
    & \underline{62.50} & 14,875 & \textbf{0.42} \\
\bottomrule
\end{tabular}%
}
\end{table}

We further introduce MiMo-RL~\citep{xiaomi2025mimo} as an additional
accuracy-oriented anchor and compose it with Klear and DECS, using Qwen3-4B
as the student model. As shown in Table~\ref{tab:anchor-generalization},
combining the two accuracy anchors, Klear+MiMo-RL, achieves the highest
overall accuracy but also substantially increases the number of inference
tokens. In contrast, DECS+MiMo-RL shows the same qualitative pattern as
Klear+DECS, improving accuracy over the base model while reducing response
tokens. We further extend \method to a three-anchor composition.
Relative to each of its two-anchor subsets, adding the third anchor shifts the
operating point toward the corresponding capability: adding DECS to
Klear+MiMo-RL improves efficiency, whereas adding Klear or MiMo-RL to the
corresponding DECS pair improves accuracy. The resulting composition balances
all three shifts and achieves the highest AES of 0.42.

\subsection{Ablation Study}
\label{sec:ablations}

\paragraph{Tilted-target correction.}
We compare Tilted-Target with Online DOPD, which recomputes each policy shift
on current-student trajectories, and Naive Offline, which directly optimizes
the cached shift with a behavior-sampled KL penalty. Within each of the Klear,
DECS, and Klear+DECS settings, all methods use matched student initializations,
prompt sequences, trajectory budgets, and update budgets. As shown in
Table~\ref{tab:ablation-summary}(a), Tilted-Target performs comparably to
Online DOPD across all three settings. In the two-anchor setting, however, it
avoids concurrently serving the four models that constitute the anchor pairs
during training, making the overall training pipeline substantially more
accessible. It also consistently yields a better trade-off than Naive Offline:
average accuracy improves by 1.51 and 3.85 points with Klear and DECS,
respectively, while the two-anchor setting uses 17.6\% fewer tokens with
slightly higher accuracy. These results show that the explicit target
stabilizes cached-shift optimization while retaining the benefits of online
supervision.

\paragraph{Multi-anchor composition.}
We compare \method with Data Mixture, which interleaves equal amounts of data
scored separately by Klear and DECS, and Sequential Composition, which applies
the two single-anchor shifts in successive stages in either order. As shown in
Table~\ref{tab:ablation-summary}(b), \method improves accuracy over Data
Mixture on all five benchmarks while using 3.8--8.7\% fewer tokens.
Sequential Composition is strongly order-sensitive, with the last applied
anchor determining whether the result favors accuracy or efficiency. By
avoiding both mixture dilution and ordering bias, \method achieves the best
overall accuracy--efficiency trade-off, as reflected by the highest AES of
0.41.

\begin{table}[!htbp]
\centering
\caption{Policy-shift distillation and multi-anchor composition ablations on
Qwen3-4B across mathematics and code. The upper block compares Online DOPD,
Naive Offline, and Tilted-Target under Klear-only, DECS-only, and their
composition; Tilted-Target closely matches Online DOPD while improving the
accuracy--efficiency trade-off over Naive Offline. The lower block compares
\method with Data Mixture and both sequential orders under equal anchor
weights and matched trajectory and update budgets. We report accuracy and mean
response tokens for each benchmark,
together with five-benchmark averages and AES. The best and second-best values
within each comparison are marked in \textbf{bold} and
\underline{underlined}, respectively.}
\vspace{\baselineskip}
\label{tab:ablation-summary}
\resizebox{\textwidth}{!}{%
\begin{tabular}{lccccccccccccc}
\toprule
\multirow{2}[4]{*}{\textbf{Variant}} &
    \multicolumn{2}{c}{\textbf{AIME 2024}} &
    \multicolumn{2}{c}{\textbf{AIME 2025}} &
    \multicolumn{2}{c}{\textbf{HMMT 2025}} &
    \multicolumn{2}{c}{\textbf{LCB v5}} &
    \multicolumn{2}{c}{\textbf{LCB v6}} &
    \multicolumn{3}{c}{\textbf{Average}} \\
\cmidrule(r){2-3}\cmidrule(r){4-5}\cmidrule(r){6-7}\cmidrule(r){8-9}\cmidrule(r){10-11}\cmidrule(l){12-14}
& \textbf{Acc.}$\uparrow$ & \textbf{\#Tok.}$\downarrow$
& \textbf{Acc.}$\uparrow$ & \textbf{\#Tok.}$\downarrow$
& \textbf{Acc.}$\uparrow$ & \textbf{\#Tok.}$\downarrow$
& \textbf{Acc.}$\uparrow$ & \textbf{\#Tok.}$\downarrow$
& \textbf{Acc.}$\uparrow$ & \textbf{\#Tok.}$\downarrow$
& \textbf{Acc.}$\uparrow$ & \textbf{\#Tok.}$\downarrow$
& \textbf{AES}$\uparrow$ \\
\midrule
Base
    & 72.9 & 14,641
    & 65.4 & 17,844
    & 41.4 & 17,626
    & 53.0 & 13,843
    & 46.8 & 14,663
    & 55.90 & 15,723 & 0.00 \\
\midrule
\multicolumn{14}{l}{\textbf{\textit{(a) Tilted-target correction}}} \\
\midrule
\multicolumn{14}{l}{\textbf{\textit{Single anchor: Qwen3-4B + Klear-only}}} \\
Online DOPD
    & \textbf{76.4} & \textbf{14,673}
    & \underline{69.8} & \underline{17,903}
    & \underline{45.3} & \underline{18,447}
    & \underline{57.7} & \underline{13,950}
    & \underline{50.6} & \underline{14,461}
    & \textbf{59.96} & \underline{15,887} & \underline{0.21} \\
Naive Offline
    & 70.7 & 21,508
    & 65.8 & 23,671
    & \textbf{45.6} & 24,401
    & \textbf{58.2} & 15,549
    & \textbf{51.7} & 16,316
    & 58.43 & 20,289 & $-0.15$ \\
\rowcolor[rgb]{.867,.922,.969}
Tilted-Target (ours)
    & \underline{75.8} & \underline{15,061}
    & \textbf{71.3} & \textbf{17,715}
    & \textbf{45.6} & \textbf{18,230}
    & 57.4 & \textbf{13,252}
    & 49.6 & \textbf{13,776}
    & \underline{59.94} & \textbf{15,607} & \textbf{0.23} \\
\midrule
\multicolumn{14}{l}{\textbf{\textit{Single anchor: Qwen3-4B + DECS-only}}} \\
Online DOPD
    & \underline{68.2} & \underline{10,240}
    & \underline{59.0} & \underline{11,942}
    & \underline{34.9} & \underline{11,663}
    & \underline{52.0} & \underline{12,017}
    & \underline{45.2} & \underline{12,832}
    & \underline{51.87} & \underline{11,739} & $\underline{-0.11}$ \\
Naive Offline
    & 64.5 & \textbf{9,472}
    & 55.0 & \textbf{10,796}
    & 32.1 & \textbf{10,065}
    & 49.3 & \textbf{11,071}
    & 42.9 & \textbf{11,738}
    & 48.77 & \textbf{10,628} & $-0.31$ \\
\rowcolor[rgb]{.867,.922,.969}
Tilted-Target (ours)
    & \textbf{68.3} & 10,918
    & \textbf{60.6} & 12,429
    & \textbf{35.0} & 11,862
    & \textbf{53.0} & 12,531
    & \textbf{46.2} & 13,437
    & \textbf{52.62} & 12,235 & $\mathbf{-0.09}$ \\
\midrule
\multicolumn{14}{l}{\textbf{\textit{Two anchors: Qwen3-4B + Klear+DECS}}} \\
Online DOPD
    & \textbf{76.1} & \underline{12,591}
    & \underline{70.9} & \underline{15,129}
    & \textbf{45.6} & \underline{15,454}
    & \textbf{59.8} & \underline{12,921}
    & \textbf{52.3} & \underline{13,582}
    & \textbf{60.93} & \underline{13,935} & \underline{0.38} \\
Naive Offline
    & 74.6 & 15,352
    & \textbf{72.0} & 17,468
    & \underline{44.2} & 16,656
    & 58.3 & 13,338
    & 49.6 & 13,771
    & 59.76 & 15,317 & 0.24 \\
\rowcolor[rgb]{.867,.922,.969}
Tilted-Target (ours)
    & \underline{76.0} & \textbf{11,521}
    & 70.1 & \textbf{13,816}
    & 43.0 & \textbf{13,743}
    & \underline{59.2} & \textbf{11,702}
    & \underline{51.3} & \textbf{12,357}
    & \underline{59.92} & \textbf{12,628} & \textbf{0.41} \\
\midrule
\multicolumn{14}{l}{\textbf{\textit{(b) Multi-anchor composition}}} \\
\midrule
Data Mixture
    & 75.4 & 12,579
    & 68.7 & 14,965
    & 42.9 & 15,059
    & 55.4 & 12,551
    & 51.0 & 12,840
    & 58.68 & 13,599 & 0.29 \\
Klear-only $\rightarrow$ DECS-only
    & 74.3 & \textbf{11,389}
    & 65.8 & \textbf{13,702}
    & 41.4 & \textbf{13,474}
    & 56.5 & \textbf{11,609}
    & 50.8 & \textbf{12,160}
    & 57.76 & \textbf{12,467} & 0.31 \\
DECS-only $\rightarrow$ Klear-only
    & \textbf{77.4} & 13,337
    & \textbf{71.8} & 16,088
    & \textbf{46.0} & 16,551
    & \underline{57.7} & 12,555
    & \textbf{51.5} & 12,959
    & \textbf{60.88} & 14,298 & \underline{0.37} \\
\rowcolor[rgb]{.867,.922,.969}
\method
    & \underline{76.0} & \underline{11,521}
    & \underline{70.1} & \underline{13,816}
    & \underline{43.0} & \underline{13,743}
    & \textbf{59.2} & \underline{11,702}
    & \underline{51.3} & \underline{12,357}
    & \underline{59.92} & \underline{12,628} & \textbf{0.41} \\
\bottomrule
\end{tabular}%
}
\end{table}

\section{Conclusion}

In this work, we introduced capability composition as a new route to efficient
reasoning.
Rather than jointly optimizing competing accuracy and efficiency objectives
from scratch, our approach reuses independently post-trained specialists,
extracts the behavioral shift acquired by each, and composes these shifts into
one student model. We instantiate this idea with \method, which aligns multiple
policy shifts at shared student token states and combines them into a joint
tilted target. Tilted-Target DOPD converts cached shifts into explicit targets
with well-defined fixed points, enabling stable offline training without
concurrently serving live anchor models. Across diverse student models and
benchmarks spanning mathematics and code, \method consistently improves the
accuracy--efficiency trade-off over the base and single-anchor policies,
generalizes to additional anchor combinations, and yields a controllable
empirical Pareto frontier as the composition weights vary. These results
demonstrate the value of \method as a new approach to efficient reasoning,
improving the accuracy--efficiency frontier by composing capabilities learned
independently for distinct objectives.

\ifpreprint
\else
  \subsection*{AI use statement}

In this work, we did not use generative AI tools to generate synthetic
datasets; develop theoretical models or conceptual frameworks; formulate
mathematical claims or assist with proofs; propose or refine hypotheses;
design research methodology or experiments; implement methods; translate
content; clean or reformat datasets; conduct qualitative or thematic data
analysis; or interpret results. We used generative AI tools only to edit the
paper to improve readability and to assist with administrative experiment
management, specifically tracking the execution status of author-specified
experiments and organizing their run records. We have reviewed all AI-assisted
work: all language edits were reviewed by the authors, and all
experiment-status summaries were verified against the original scheduler
records, logs, and output artifacts. We take responsibility for the final
content of this work, including text, claims, and artifacts produced with the
aid of generative AI.
 % Retained for the ICLR submission.
\fi

\bibliography{iclr2027_conference}
\bibliographystyle{iclr2027_conference}

\raggedbottom
\clearpage
\appendix
% Appendix tables and qualitative panels are relatively dense.  Allow LaTeX
% to place them beside the surrounding discussion instead of deferring small
% floats to sparsely populated float pages.
\setcounter{topnumber}{4}
\setcounter{bottomnumber}{3}
\setcounter{totalnumber}{6}
\renewcommand{\topfraction}{0.98}
\renewcommand{\bottomfraction}{0.95}
\renewcommand{\textfraction}{0.02}
\renewcommand{\floatpagefraction}{0.80}
\setlength{\textfloatsep}{10pt plus 2pt minus 2pt}
\setlength{\floatsep}{8pt plus 2pt minus 2pt}
\setlength{\intextsep}{8pt plus 2pt minus 2pt}

\section{Theoretical Analysis and Proofs}
\label{app:theory}

This section analyzes the full-distribution formulation underlying
Tilted-Target DOPD and \method. We characterize the explicit learning target
for offline capability transfer and the joint target obtained by composing
multiple policy shifts. The results are stated for the zero-discount,
token-level DOPD surrogate used in Section~\ref{sec:method}: at each fixed
state, the anchor shift is treated as an immediate reward. They do not identify
the resulting local target with the conditionals of the idealized
sequence-level optimum in Eq.~\ref{eq:dopd-objective}, whose conditionals can
also depend on future shifts. We first derive a general variational identity,
then apply it to one anchor pair and to a weighted composition of multiple
anchor-pair shifts.

\subsection{Setup and Assumptions}
\label{app:theory-setup}

Let $\mathcal{A}$ be the finite student-token action space and
$\mathcal{P}(\mathcal{A})$ its probability simplex. Fix a token-level state
$s$ in the support of the cached state distribution $d_{\pi_b}$, and write
$b_s(a)=\pi_b(a\mid s)$. We use $d_{\pi_b}$ for the behavior-policy state
distribution; the implementation replaces it by the corresponding finite
empirical distribution of cached states. The target construction is
pointwise in $s$. We make the following assumptions:
\begin{enumerate}
    \item $\alpha>0$ and $b_s(a)>0$ for every $a\in\mathcal{A}$;
    \item every aligned anchor score $r_i(a\mid s)$ is finite; and
    \item $b_s$ and the anchor scores are cached quantities and therefore do
    not depend on the optimized parameters $\theta$.
\end{enumerate}
These conditions hold for finite-logit softmax policies in exact arithmetic.
More generally, the analysis can be restricted to any common support on which
the conditions hold.
For extensions to a non-empirical state distribution, we additionally assume
that the reward, KL, and log-partition terms have finite expectations for the
policies under consideration. This condition holds automatically for a
finite cache under the assumptions above.

For a finite score function $r:\mathcal{A}\rightarrow\mathbb{R}$ and a
distribution $p\in\mathcal{P}(\mathcal{A})$, define the local regularized
objective
\begin{equation}
    \Phi_s(p;r)
    =
    \sum_{a\in\mathcal{A}}p(a)r(a\mid s)
    -
    \alpha D_{\mathrm{KL}}(p\,\|\,b_s),
    \label{eq:app-local-objective}
\end{equation}
where $0\log 0=0$. Its partition function and exponentially tilted
distribution are
\begin{align}
    Z_r(s)
    &=
    \sum_{a\in\mathcal{A}}
    b_s(a)\exp\!\left(\frac{r(a\mid s)}{\alpha}\right),
    \label{eq:app-partition-function}\\
    T_r(a\mid s)
    &=
    \frac{
        b_s(a)\exp\!\left(r(a\mid s)/\alpha\right)
    }{Z_r(s)}.
    \label{eq:app-general-tilt}
\end{align}
Finiteness of $\mathcal{A}$ and of $r$ implies
$0<Z_r(s)<\infty$, so $T_r(\cdot\mid s)$ is a well-defined,
strictly positive distribution.

When all policies share an action space and both anchors are strictly positive
on it, an anchor pair induces
\begin{equation}
    r_i(a\mid s)
    =
    \log\frac{\pi_i^+(a\mid s)}{\pi_i^-(a\mid s)}.
    \label{eq:app-exact-anchor-shift}
\end{equation}
The variational results below require only that $r_i$ be finite; thus they
also apply to the finite scores produced by cross-tokenizer alignment. The
exact product-of-policy-ratios interpretation additionally requires
Eq.~\ref{eq:app-exact-anchor-shift}. We describe the practical alignment
procedure separately in Appendix~\ref{app:cross-tokenizer}.

\subsection{Tilted-Target DOPD: Objective Equivalence and Fixed Point}
\label{app:tilted-target-proof}

\begin{lemma}[Exponential-tilting identity]
\label{lem:tilting-identity}
For every $p\in\mathcal{P}(\mathcal{A})$,
\begin{equation}
    \alpha D_{\mathrm{KL}}(p\,\|\,T_r(\cdot\mid s))
    =
    \alpha D_{\mathrm{KL}}(p\,\|\,b_s)
    -
    \sum_{a\in\mathcal{A}}p(a)r(a\mid s)
    +
    \alpha\log Z_r(s).
    \label{eq:app-tilting-identity}
\end{equation}
\end{lemma}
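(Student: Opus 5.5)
The proof is a direct pointwise computation, and the plan is to expand the left-hand side using the explicit form of $T_r$ from Eq.~\ref{eq:app-general-tilt}. By the setup assumptions, $b_s(a)>0$, $r$ is finite, and $0<Z_r(s)<\infty$. Hence $T_r(a\mid s)>0$ for every $a$, and its logarithm is
\begin{equation*}
    \log T_r(a\mid s)
    =
    \log b_s(a)
    +
    \frac{r(a\mid s)}{\alpha}
    -
    \log Z_r(s).
\end{equation*}
Writing $D_{\mathrm{KL}}(p\,\|\,T_r)=\sum_a p(a)\bigl[\log p(a)-\log T_r(a\mid s)\bigr]$ and substituting this expression gives three pieces:
\begin{equation*}
    \sum_a p(a)\log\frac{p(a)}{b_s(a)},
    \qquad
    -\frac{1}{\alpha}\sum_a p(a)r(a\mid s),
    \qquad
    \log Z_r(s)\sum_a p(a).
\end{equation*}
The first piece is $D_{\mathrm{KL}}(p\,\|\,b_s)$, and the third equals $\log Z_r(s)$ because $p$ sums to one. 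Multiplying through by $\alpha$ then yields Eq.~\ref{eq:app-tilting-identity}.

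The only step needing care is the treatment of actions with $p(a)=0$. There we use the convention $0\log 0=0$ stated with Eq.~\ref{eq:app-local-objective}. We split the sum over $\mathcal{A}$ into the support of $p$ and its complement. On the complement every summand on both sides vanishes, because each term carries a factor $p(a)$ and all the other factors are finite. On the support every logarithm is finite, so the splitting into three sums is legitimate.

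Finiteness matters here. Since $T_r$ and $b_s$ are strictly positive, both KL divergences are finite for every $p\in\mathcal{P}(\mathcal{A})$. There is therefore no $\infty-\infty$ ambiguity, and the identity holds as an equality of real numbers. I do not expect any real obstacle beyond this bookkeeping.

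I would close the proof by noting two consequences used later. First, $\alpha\log Z_r(s)$ does not depend on $p$, so Eq.~\ref{eq:app-tilting-identity} is exactly $-\Phi_s(p;r)$ plus a constant. Second, since KL is nonnegative and vanishes only at $p=T_r$, the identity identifies $T_r$ as the unique maximizer of $\Phi_s(\cdot\,;r)$. This justifies the closed form in Eq.~\ref{eq:tilted-target} and, with $r=r_{\mathbf{w}}$, the one in Eq.~\ref{eq:maps-target}.
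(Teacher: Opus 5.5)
Your proposal is correct and follows the paper's proof: substitute $\log T_r(a\mid s)=\log b_s(a)+r(a\mid s)/\alpha-\log Z_r(s)$ into the definition of the KL divergence and use $\sum_a p(a)=1$. The only difference is in how zero-probability actions are handled. The paper first assumes $p$ has full support and extends to the boundary by continuity via $\lim_{u\downarrow 0}u\log u=0$. You instead restrict the sums to the support of $p$ using the convention $0\log 0=0$, which is equally valid, and slightly more direct, because $b_s$ and $T_r$ are strictly positive.
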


\begin{proof}
First suppose $p(a)>0$ for every $a$.
Equation~\ref{eq:app-general-tilt} gives
\[
    \log T_r(a\mid s)
    =
    \log b_s(a)
    +
    \frac{r(a\mid s)}{\alpha}
    -
    \log Z_r(s).
\]
Substituting this expression into the definition of KL divergence yields
\begin{align*}
    \alpha D_{\mathrm{KL}}(p\,\|\,T_r)
    &=
    \alpha\sum_a p(a)
    \left[
        \log p(a)-\log b_s(a)
        -\frac{r(a\mid s)}{\alpha}
        +\log Z_r(s)
    \right]\\
    &=
    \alpha D_{\mathrm{KL}}(p\,\|\,b_s)
    -
    \sum_a p(a)r(a\mid s)
    +
    \alpha\log Z_r(s),
\end{align*}
where the last term uses $\sum_a p(a)=1$.
For a distribution on the boundary of the simplex, the same identity follows
by continuity using $\lim_{u\downarrow 0}u\log u=0$, since both $b_s$ and
$T_r$ are strictly positive.
\end{proof}

\begin{proposition}[Unique single-anchor target]
\label{prop:single-anchor-target}
For a fixed state $s$, the tilted distribution $T_r(\cdot\mid s)$ is the
unique maximizer of the local objective in
Eq.~\ref{eq:app-local-objective}. In particular, taking
$r=r_\Delta$ recovers the target $q$ in Eq.~\ref{eq:tilted-target}.
\end{proposition}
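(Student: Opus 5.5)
The plan is to reduce the maximization to a KL minimization using Lemma~\ref{lem:tilting-identity}, and then use the basic properties of KL divergence. Rearranging Eq.~\ref{eq:app-tilting-identity} gives, for every $p\in\mathcal{P}(\mathcal{A})$,
\[
    \Phi_s(p;r)
    =
    \alpha\log Z_r(s)
    -
    \alpha D_{\mathrm{KL}}\bigl(p\,\|\,T_r(\cdot\mid s)\bigr).
\]
The first term does not depend on $p$, and by the finiteness remark after Eq.~\ref{eq:app-general-tilt} it is a finite real number. Since $\alpha>0$ by assumption, maximizing $\Phi_s(\cdot;r)$ over the simplex is therefore equivalent to minimizing $D_{\mathrm{KL}}(p\,\|\,T_r(\cdot\mid s))$ over the simplex.

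Next I would invoke Gibbs' inequality. For distributions on the finite set $\mathcal{A}$, $D_{\mathrm{KL}}(p\,\|\,T)\ge 0$, with equality if and only if $p=T$. Here $T_r$ is strictly positive, so the KL is finite for every $p$, including boundary points of the simplex; Lemma~\ref{lem:tilting-identity} already covers those. To keep the argument self-contained, I would derive Gibbs' inequality from Jensen's inequality applied to the strictly concave $\log$ on the support of $p$:
\[
    -D_{\mathrm{KL}}(p\,\|\,T)=\sum_{a:p(a)>0}p(a)\log\frac{T(a)}{p(a)}\le\log\sum_{a:p(a)>0}T(a)\le 0.
\]
The equality cases then need to be read off explicitly. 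Equality in Jensen forces $T(a)/p(a)$ to be constant on the support of $p$. Equality in the second step forces that support to carry all of $T$'s mass, which means the support is all of $\mathcal{A}$, since $T_r$ is strictly positive. Together these give $p=T_r$. It follows that $\Phi_s(p;r)\le\alpha\log Z_r(s)$, with equality exactly at $p=T_r(\cdot\mid s)$. This establishes both that $T_r$ is a maximizer and that it is the unique one.

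The specialization to Eq.~\ref{eq:tilted-target} is by direct substitution. Set $r=r_\Delta$, which is finite under the stated assumptions, and note that $b_s=\pi_b(\cdot\mid s)$. Then Eq.~\ref{eq:app-partition-function} and Eq.~\ref{eq:app-general-tilt} reproduce $Z(s)$ and $q(\cdot\mid s)$ verbatim, and $\Phi_s(\cdot;r_\Delta)$ coincides with the objective inside the $\arg\max$ of Eq.~\ref{eq:tilted-target}. The only step needing care is the uniqueness and equality case on the boundary of the simplex, which the strict positivity of $T_r$ handles. As a cross-check, one could also note that $\Phi_s$ is strictly concave, because $-\alpha D_{\mathrm{KL}}(\cdot\,\|\,b_s)$ is strictly concave, but this is not needed once the KL argument is in place.
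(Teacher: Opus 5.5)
Your proposal is correct and follows the paper's route: rearrange Lemma~\ref{lem:tilting-identity} into $\Phi_s(p;r)=\alpha\log Z_r(s)-\alpha D_{\mathrm{KL}}(p\,\|\,T_r(\cdot\mid s))$, then apply Gibbs' inequality, using strict positivity of $T_r$ and $\alpha>0$ to get uniqueness. Your Jensen-based derivation of the Gibbs equality case and the explicit substitution $r=r_\Delta$ spell out steps that the paper states in a single line.
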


\begin{proof}
Rearranging Lemma~\ref{lem:tilting-identity} gives
\begin{equation}
    \Phi_s(p;r)
    =
    \alpha\log Z_r(s)
    -
    \alpha D_{\mathrm{KL}}(p\,\|\,T_r(\cdot\mid s)).
    \label{eq:app-local-objective-as-kl}
\end{equation}
Gibbs' inequality gives
$D_{\mathrm{KL}}(p\,\|\,T_r)\geq 0$, with equality if and only if
$p=T_r$ because $T_r$ is strictly positive. Since $\alpha>0$, the stated
maximizer is therefore unique.
\end{proof}

The pointwise result extends directly to the fixed offline state
distribution. Define
\begin{align}
    \mathcal{J}_{d}(\pi;r)
    &=
    \mathbb{E}_{s\sim d_{\pi_b}}
    \left[\Phi_s(\pi(\cdot\mid s);r)\right],
    \label{eq:app-offline-objective}\\
    \mathcal{L}_{d}(\pi;r)
    &=
    \alpha\,
    \mathbb{E}_{s\sim d_{\pi_b}}
    \left[
        D_{\mathrm{KL}}
        \bigl(
            \pi(\cdot\mid s)\,\|\,T_r(\cdot\mid s)
        \bigr)
    \right].
    \label{eq:app-offline-target-loss}
\end{align}

\begin{corollary}[Offline objective equivalence]
\label{cor:offline-objective-equivalence}
For every policy $\pi$,
\begin{equation}
    \mathcal{L}_{d}(\pi;r)
    =
    -\mathcal{J}_{d}(\pi;r)
    +
    \alpha\,
    \mathbb{E}_{s\sim d_{\pi_b}}
    [\log Z_r(s)].
    \label{eq:app-offline-equivalence}
\end{equation}
Consequently, minimizing $\mathcal{L}_d$ and maximizing $\mathcal{J}_d$ are
equivalent over any common policy class, including a restricted parametric
class. If that class can realize $T_r$ on
$d_{\pi_b}$-almost every state, then $T_r$ is a global minimizer with zero
loss; otherwise, training minimizes the expected reverse KL to this explicit
target.
\end{corollary}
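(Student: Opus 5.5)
The plan is to integrate the pointwise identity of Lemma~\ref{lem:tilting-identity} against $d_{\pi_b}$. Fix a policy $\pi$. For each state $s$ in the support of $d_{\pi_b}$, apply Eq.~\ref{eq:app-local-objective-as-kl} with $p=\pi(\cdot\mid s)$, which is the rearranged form of the lemma used in Proposition~\ref{prop:single-anchor-target}:
\[
    \alpha D_{\mathrm{KL}}\bigl(\pi(\cdot\mid s)\,\|\,T_r(\cdot\mid s)\bigr)
    =
    -\Phi_s(\pi(\cdot\mid s);r)+\alpha\log Z_r(s).
\]
Taking expectations over $s\sim d_{\pi_b}$ on both sides and using linearity yields Eq.~\ref{eq:app-offline-equivalence}, by the definitions in Eqs.~\ref{eq:app-offline-objective} and~\ref{eq:app-offline-target-loss}.

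The only step needing care is the justification for splitting the expectation. For the finite empirical cache this is trivial, since every sum is finite. For a general $d_{\pi_b}$, I will invoke the integrability assumption of Appendix~\ref{app:theory-setup}: the reward, KL, and log-partition terms have finite expectations for the policies under consideration. Then the expectation of the right-hand side is the difference of finite integrals. Because the left-hand side is a nonnegative integrand equal to this sum, it is also integrable, so no $\infty-\infty$ ambiguity arises. I expect this measure-theoretic bookkeeping to be the main, though minor, obstacle.

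For the consequences, note that the additive term $C=\alpha\,\mathbb{E}_{s}[\log Z_r(s)]$ depends only on the cached $b_s$ and $r$. By assumption~3 it is therefore independent of $\pi$ and of $\theta$. Hence on any policy class $\Pi$, parametric or not, $\mathcal{L}_d(\pi)=-\mathcal{J}_d(\pi)+C$. The two functionals then have identical minimizer/maximizer sets and identical orderings, which gives the claimed equivalence.

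Finally, suppose $\Pi$ contains a $\pi$ with $\pi(\cdot\mid s)=T_r(\cdot\mid s)$ for $d_{\pi_b}$-a.e.\ $s$. Then the integrand of $\mathcal{L}_d$ vanishes almost everywhere by Gibbs' inequality, exactly as in Proposition~\ref{prop:single-anchor-target}. So $\mathcal{L}_d(\pi)=0$, and since $\mathcal{L}_d\ge 0$ always, this is a global minimum. Otherwise, the definition of $\mathcal{L}_d$ directly shows that training minimizes the expected reverse KL $\mathbb{E}_s[D_{\mathrm{KL}}(\pi\,\|\,T_r)]$ to the explicit target.
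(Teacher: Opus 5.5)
Your proposal is correct and follows essentially the same route as the paper: take the expectation of the pointwise tilting identity (Lemma~\ref{lem:tilting-identity}) over the fixed cached distribution, observe that $\alpha\,\mathbb{E}_{s}[\log Z_r(s)]$ depends only on cached quantities, and read off the optimization consequences. The integrability bookkeeping for a non-empirical $d_{\pi_b}$ and the use of $\mathcal{L}_d\ge 0$ to certify the zero-loss global minimum spell out what the paper's proof dismisses with ``follow immediately.''
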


\begin{proof}
Take the expectation of Eq.~\ref{eq:app-tilting-identity} over the fixed
distribution $d_{\pi_b}$. The final term in
Eq.~\ref{eq:app-offline-equivalence} depends only on cached quantities, not
on $\pi$. The statements about optimization over a policy class follow
immediately.
\end{proof}

\begin{corollary}[Fixed-point property]
\label{cor:fixed-point-property}
Suppose $\pi_{\theta^\star}(\cdot\mid s)=T_r(\cdot\mid s)$ for
$d_{\pi_b}$-almost every $s$, and $\pi_\theta$ is differentiable at
$\theta^\star$. For a non-empirical state distribution, additionally assume
that differentiation and expectation can be interchanged. Then
\[
    \mathcal{L}_{d}(\pi_{\theta^\star};r)=0
    \qquad\text{and}\qquad
    \nabla_\theta
    \mathcal{L}_{d}(\pi_\theta;r)
    \big|_{\theta=\theta^\star}
    =0.
\]
\end{corollary}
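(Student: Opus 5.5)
The loss value follows directly from the hypothesis. For $d_{\pi_b}$-almost every $s$ we have $D_{\mathrm{KL}}(\pi_{\theta^\star}(\cdot\mid s)\,\|\,T_r(\cdot\mid s))=0$ by Gibbs' inequality, as already used in Proposition~\ref{prop:single-anchor-target}. Taking the expectation in Eq.~\ref{eq:app-offline-target-loss} over this null-set-modulo distribution then gives $\mathcal{L}_d(\pi_{\theta^\star};r)=0$.

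For the gradient, I will work pointwise in $s$ and then take the expectation. At a fixed cached state, Assumption~3 says that $T_r(\cdot\mid s)$ does not depend on $\theta$. Writing $p_\theta=\pi_\theta(\cdot\mid s)$, the loss is
\[
\ell_s(\theta)=\alpha\sum_a p_\theta(a)\bigl[\log p_\theta(a)-\log T_r(a\mid s)\bigr].
\]
Differentiating gives
\[
\nabla_\theta \ell_s=\alpha\sum_a \nabla_\theta p_\theta(a)\bigl[\log p_\theta(a)-\log T_r(a\mid s)+1\bigr].
\]
The constant $1$ drops out because $\sum_a\nabla_\theta p_\theta(a)=\nabla_\theta 1=0$. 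At $\theta^\star$ the bracket $\log p_{\theta^\star}(a)-\log T_r(a\mid s)$ vanishes for every $a$, so the per-state gradient is zero. Differentiability of $\log p_\theta$ at $\theta^\star$ is not a concern, since $p_{\theta^\star}=T_r$ is strictly positive; in the finite-logit softmax setting this holds in a neighborhood.

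A second, more structural route is also available. Because $\ell_s\ge 0$ everywhere and $\ell_s(\theta^\star)=0$, the point $\theta^\star$ is a global minimizer of a function that is differentiable there, and so its gradient must vanish. This argument avoids the explicit computation. It does require differentiability of the composite map $\theta\mapsto\ell_s(\theta)$ at $\theta^\star$. That in turn follows from differentiability of $\pi_\theta$ together with smoothness of the KL in its first argument on the interior of the simplex, and $T_r$ lies in that interior. I would present the explicit calculation and note the minimizer argument as an alternative.

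The last step is to pass from per-state gradients to the gradient of $\mathcal{L}_d$. For the empirical cache, $\mathcal{L}_d$ is a finite average, so its gradient is the average of the per-state gradients, each of which is zero (and states outside the almost-every set carry no mass). For a non-empirical $d_{\pi_b}$, the assumed interchange of differentiation and expectation gives $\nabla_\theta\mathcal{L}_d=\mathbb{E}_{s}[\nabla_\theta\ell_s]=0$.

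The main obstacle is bookkeeping rather than substance. One has to ensure that the "almost every $s$" hypothesis is compatible with differentiating under the expectation, and that positivity of $p_{\theta^\star}$ is available exactly where the logarithm is differentiated. Both are covered by the stated assumptions.
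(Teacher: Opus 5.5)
Your proposal is correct and follows essentially the same route as the paper: the loss vanishes because $\pi_{\theta^\star}$ and $T_r$ coincide almost everywhere, and the per-state gradient $\sum_a \nabla_\theta \pi_{\theta}(a\mid s)\,[\log(\pi_{\theta}(a\mid s)/T_r(a\mid s))+1]$ collapses at $\theta^\star$ to $\nabla_\theta \sum_a \pi_\theta(a\mid s)=0$, after which the expectation over cached states is taken. Your extra remarks on the positivity of $T_r$ and your alternative global-minimizer argument are sound, but the paper's proof does not need them.
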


\begin{proof}
The first statement follows from equality of the two distributions. For a
fixed state, differentiating the KL divergence at
$\pi_{\theta^\star}=T_r$ gives
\begin{align*}
    \nabla_\theta
    D_{\mathrm{KL}}(\pi_\theta\,\|\,T_r)
    \big|_{\theta=\theta^\star}
    &=
    \sum_a
    \nabla_\theta\pi_{\theta^\star}(a\mid s)
    \left(
        \log
        \frac{\pi_{\theta^\star}(a\mid s)}{T_r(a\mid s)}
        +1
    \right)\\
    &=
    \nabla_\theta
    \sum_a\pi_{\theta^\star}(a\mid s)
    =0.
\end{align*}
Taking the expectation over fixed cached states preserves the equality.
\end{proof}

Corollary~\ref{cor:offline-objective-equivalence} also precisely states the
connection to the online update at initialization. If
$\pi_{\theta_0}=\pi_b$, then the online state distribution and the
distribution used to generate the cache coincide at $\theta_0$. Hence, in
expectation over cache construction and when sampled states are treated as
fixed during the actor update, gradient descent on the Tilted-Target loss and
gradient ascent on the zero-discount DOPD surrogate have the same expected
per-state update direction at that point. A particular finite cache retains
ordinary sampling error. The statement also does not extend to later online
state distributions once $\pi_\theta$ departs from $\pi_b$.

\paragraph{Why the cached sampled-KL surrogate lacks this guarantee.}
The sampled KL estimator in Eq.~\ref{eq:naive-offline-dopd} is
$g(\ell)=\exp(\ell)-1-\ell$, where
$\ell(a)=\log b_s(a)-\log p(a)$. For a strictly positive $p$, if
$a\sim p$, then
\begin{equation}
    \mathbb{E}_{a\sim p}[g(\ell(a))]
    =
    \sum_a p(a)
    \left[
        \frac{b_s(a)}{p(a)}
        -1
        -\log\frac{b_s(a)}{p(a)}
    \right]
    =
    D_{\mathrm{KL}}(p\,\|\,b_s).
    \label{eq:app-on-policy-sampled-kl}
\end{equation}
For a cached action $a_b\sim b_s$, however, its population expectation is
\begin{equation}
    \widetilde{K}_{b_s}(p)
    =
    \sum_a b_s(a)
    \left[
        \frac{b_s(a)}{p(a)}
        -1
        -\log\frac{b_s(a)}{p(a)}
    \right],
    \label{eq:app-cached-sampled-kl}
\end{equation}
which is generally not $D_{\mathrm{KL}}(p\,\|\,b_s)$. Therefore the
identity in Lemma~\ref{lem:tilting-identity} does not apply to the expected
cached sampled-KL objective away from $p=b_s$.

For a concrete counterexample, take two actions and set $\alpha=1$,
$b_s=\pi^-=(1/2,1/2)$, and $\pi^+=(3/4,1/4)$. This anchor pair induces
$r=(\log(3/2),\log(1/2))$, and the exact tilted target is
$T_r=(3/4,1/4)$. Parameterizing $p=(x,1-x)$, direct differentiation of
Eq.~\ref{eq:app-cached-sampled-kl} gives
\[
    \frac{\mathrm{d}}{\mathrm{d}x}
    \widetilde{K}_{b_s}(x,1-x)
    =
    \frac{1}{2}\frac{x-1/2}{x^2}
    -
    \frac{1}{2}
    \frac{(1-x)-1/2}{(1-x)^2},
\]
and therefore
\[
    \left.
    \frac{\mathrm{d}}{\mathrm{d}x}
    \widetilde{K}_{b_s}(x,1-x)
    \right|_{x=3/4}
    =
    \frac{20}{9}.
\]
Thus the derivative of
$\Phi_s^{\mathrm{cached}}(p;r)
=\langle p,r\rangle-\alpha\widetilde{K}_{b_s}(p)$ at the exact target is
$\log 3-20/9\neq 0$. The intended tilted target is not even a
stationary point in this example. This establishes a lack of a general
fixed-point guarantee; it does not assert that every finite run of the naive
surrogate must fail.

\subsection{\method: Joint Targets for Capability Composition}
\label{app:multi-anchor-proof}

For $K$ anchor pairs, let $\mathbf{w}=(w_1,\ldots,w_K)$ with
$w_i\geq 0$, and define the joint score
\[
    r_{\mathbf{w}}(a\mid s)
    =
    \sum_{i=1}^{K}w_i r_i(a\mid s).
\]

\begin{proposition}[Unique multi-anchor target]
\label{prop:multi-anchor-target}
For every fixed state $s$, the unique solution to
\begin{equation}
    \max_{p\in\mathcal{P}(\mathcal{A})}
    \left\{
        \sum_{i=1}^{K}w_i
        \mathbb{E}_{a\sim p}[r_i(a\mid s)]
        -
        \alpha D_{\mathrm{KL}}(p\,\|\,b_s)
    \right\}
    \label{eq:app-multi-anchor-local-objective}
\end{equation}
is
\begin{equation}
    q_{\mathbf{w}}(a\mid s)
    =
    \frac{
        b_s(a)
        \exp\!\left(
            \frac{1}{\alpha}
            \sum_{i=1}^{K}w_i r_i(a\mid s)
        \right)
    }{Z_{\mathbf{w}}(s)},
    \label{eq:app-multi-anchor-target}
\end{equation}
where
\[
    Z_{\mathbf{w}}(s)
    =
    \sum_a b_s(a)
    \exp\!\left(
        \frac{1}{\alpha}
        \sum_{i=1}^{K}w_i r_i(a\mid s)
    \right).
\]
If every score satisfies Eq.~\ref{eq:app-exact-anchor-shift}, then
\begin{equation}
    q_{\mathbf{w}}(a\mid s)
    =
    \frac{b_s(a)}{Z_{\mathbf{w}}(s)}
    \prod_{i=1}^{K}
    \left(
        \frac{\pi_i^+(a\mid s)}
             {\pi_i^-(a\mid s)}
    \right)^{w_i/\alpha}.
    \label{eq:app-multi-anchor-product}
\end{equation}
\end{proposition}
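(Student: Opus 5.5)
The plan is to reduce the statement to Proposition~\ref{prop:single-anchor-target} by treating the weighted combination as a single score function. First, I will set $r=r_{\mathbf{w}}$. By linearity of expectation,
\[
\sum_{i=1}^{K}w_i\,\mathbb{E}_{a\sim p}[r_i(a\mid s)]=\sum_a p(a)\,r_{\mathbf{w}}(a\mid s),
\]
so the objective in Eq.~\ref{eq:app-multi-anchor-local-objective} is exactly $\Phi_s(p;r_{\mathbf{w}})$ from Eq.~\ref{eq:app-local-objective}.

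Second, I will check that $r_{\mathbf{w}}$ satisfies the standing hypotheses. It is a finite nonnegative combination of finite scores, so it is finite. It is also built only from cached quantities, so it does not depend on $\theta$. Hence $0<Z_{r_{\mathbf{w}}}(s)<\infty$, and the tilted distribution $T_{r_{\mathbf{w}}}(\cdot\mid s)$ is well defined and strictly positive. Note that neither normalization $\sum_i w_i=1$ nor nonnegativity is actually needed here; finiteness suffices.

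Third, I will invoke Proposition~\ref{prop:single-anchor-target}, or equivalently Lemma~\ref{lem:tilting-identity} together with Gibbs' inequality. This shows that $T_{r_{\mathbf{w}}}$ is the unique maximizer. Reading off Eqs.~\ref{eq:app-partition-function} and \ref{eq:app-general-tilt} with $r=r_{\mathbf{w}}$ gives $Z_{\mathbf{w}}=Z_{r_{\mathbf{w}}}$ and $q_{\mathbf{w}}=T_{r_{\mathbf{w}}}$, which is Eq.~\ref{eq:app-multi-anchor-target}.

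Finally, for the product form, I will assume Eq.~\ref{eq:app-exact-anchor-shift}, where both anchors are strictly positive. Then
\[
\exp\!\Bigl(\tfrac{1}{\alpha}\textstyle\sum_i w_i r_i\Bigr)=\prod_i\exp\!\bigl(\tfrac{w_i}{\alpha}\log(\pi_i^+/\pi_i^-)\bigr)=\prod_i(\pi_i^+/\pi_i^-)^{w_i/\alpha}.
\]
Substituting this into the exponent gives Eq.~\ref{eq:app-multi-anchor-product}.

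No step is a genuine obstacle. The only point requiring care is that uniqueness relies on the strict positivity of $T_{r_{\mathbf{w}}}$. That positivity is inherited from $b_s>0$ and the finiteness of $r_{\mathbf{w}}$, and it is what allows the equality case of Gibbs' inequality to pin down $p$.
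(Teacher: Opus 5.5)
Your proposal is correct and follows the paper's proof essentially step for step. Linearity identifies the objective with $\Phi_s(p;r_{\mathbf{w}})$, Proposition~\ref{prop:single-anchor-target} applied to the finite score $r_{\mathbf{w}}$ gives the unique tilted maximizer, and the exp--log identity yields the product form; your side remark that neither nonnegativity nor normalization of $\mathbf{w}$ is needed is also right, since only finiteness of $r_{\mathbf{w}}$ enters.
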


\begin{proof}
Linearity of expectation makes
Eq.~\ref{eq:app-multi-anchor-local-objective} equal to
$\Phi_s(p;r_{\mathbf{w}})$. Proposition~\ref{prop:single-anchor-target},
applied to the finite score $r_{\mathbf{w}}$, gives the unique solution in
Eq.~\ref{eq:app-multi-anchor-target}. Under
Eq.~\ref{eq:app-exact-anchor-shift},
\begin{align*}
    \exp\!\left(
        \frac{1}{\alpha}
        \sum_i w_i r_i(a\mid s)
    \right)
    &=
    \exp\!\left(
        \sum_i\frac{w_i}{\alpha}
        \log\frac{\pi_i^+(a\mid s)}{\pi_i^-(a\mid s)}
    \right)\\
    &=
    \prod_i
    \left(
        \frac{\pi_i^+(a\mid s)}{\pi_i^-(a\mid s)}
    \right)^{w_i/\alpha},
\end{align*}
which proves Eq.~\ref{eq:app-multi-anchor-product}.
\end{proof}

Define the fixed-state-distribution multi-anchor objective
\begin{equation}
    \mathcal{J}_{\mathrm{joint},d}(\pi;\mathbf{w})
    =
    \mathbb{E}_{s\sim d_{\pi_b}}
    \left[
        \sum_{i=1}^{K}w_i
        \mathbb{E}_{a\sim\pi(\cdot\mid s)}
        [r_i(a\mid s)]
        -
        \alpha D_{\mathrm{KL}}
        \bigl(
            \pi(\cdot\mid s)\,\|\,b_s
        \bigr)
    \right].
    \label{eq:app-maps-offline-objective}
\end{equation}

\begin{corollary}[\method objective equivalence]
\label{cor:maps-objective-equivalence}
For every policy $\pi$,
\begin{align}
    &\alpha
    \mathbb{E}_{s\sim d_{\pi_b}}
    \left[
        D_{\mathrm{KL}}
        \bigl(
            \pi(\cdot\mid s)\,\|\,q_{\mathbf{w}}(\cdot\mid s)
        \bigr)
    \right]
    \nonumber\\
    &\qquad
    =
    -\mathcal{J}_{\mathrm{joint},d}(\pi;\mathbf{w})
    +
    \alpha
    \mathbb{E}_{s\sim d_{\pi_b}}
    [\log Z_{\mathbf{w}}(s)].
    \label{eq:app-maps-objective-equivalence}
\end{align}
Hence the \method loss in Eq.~\ref{eq:maps-loss} is exactly the negative
joint regularized objective on the fixed cached states, up to a constant
independent of the student policy. This equivalence holds over any common
policy class.
\end{corollary}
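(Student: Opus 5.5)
The plan is to reduce this corollary to the single-anchor case that has already been established, with the joint score $r_{\mathbf{w}}$ playing the role of $r$.

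First, I will check that $r_{\mathbf{w}}(a\mid s)=\sum_i w_i r_i(a\mid s)$ is a finite score on the finite action space $\mathcal{A}$. This follows from the assumptions in Appendix~\ref{app:theory-setup}, since each $r_i$ is finite and each $w_i\geq 0$. It follows that $Z_{\mathbf{w}}(s)=Z_{r_{\mathbf{w}}}(s)\in(0,\infty)$. Comparing Eq.~\ref{eq:app-multi-anchor-target} with Eq.~\ref{eq:app-general-tilt} then gives $q_{\mathbf{w}}(\cdot\mid s)=T_{r_{\mathbf{w}}}(\cdot\mid s)$ exactly.

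Second, I will invoke Lemma~\ref{lem:tilting-identity} at each fixed cached state $s$, with $p=\pi(\cdot\mid s)$ and $r=r_{\mathbf{w}}$. By linearity of the inner sum, $\sum_a p(a)r_{\mathbf{w}}(a\mid s)=\sum_i w_i\mathbb{E}_{a\sim p}[r_i(a\mid s)]$, so the pointwise identity reads
\[
\alpha D_{\mathrm{KL}}\bigl(\pi(\cdot\mid s)\,\|\,q_{\mathbf{w}}(\cdot\mid s)\bigr)
=-\Bigl[\textstyle\sum_i w_i\mathbb{E}_{a\sim\pi(\cdot\mid s)}[r_i(a\mid s)]-\alpha D_{\mathrm{KL}}\bigl(\pi(\cdot\mid s)\,\|\,b_s\bigr)\Bigr]+\alpha\log Z_{\mathbf{w}}(s).
\]
Equivalently, this is Corollary~\ref{cor:offline-objective-equivalence} applied with $r=r_{\mathbf{w}}$, once one notes that $\mathcal{J}_d(\pi;r_{\mathbf{w}})=\mathcal{J}_{\mathrm{joint},d}(\pi;\mathbf{w})$.

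Third, I will take the expectation over the fixed distribution $d_{\pi_b}$, which yields Eq.~\ref{eq:app-maps-objective-equivalence}. The term $\alpha\mathbb{E}[\log Z_{\mathbf{w}}(s)]$ depends only on $b_s$ and the cached scores. By assumption~3 it is therefore independent of $\pi$ and $\theta$, which gives the ``up to a constant'' statement. The claim about any common policy class holds because the identity is valid for every $\pi$.

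The only delicate step is splitting the expectation of a sum into a sum of expectations. For the empirical cache this is trivial, since the sums are finite. For a non-empirical $d_{\pi_b}$, I will rely on the stated integrability assumption that the reward, KL, and log-partition terms have finite expectations. Without that assumption, $\infty-\infty$ could arise, so I expect this to be the main, though minor, obstacle.
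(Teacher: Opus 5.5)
Your proposal is correct and matches the paper's proof: apply Lemma~\ref{lem:tilting-identity} with $r=r_{\mathbf{w}}$ at each cached state, then take the expectation over $d_{\pi_b}$. The steps you add make explicit what the paper's one-line proof leaves implicit: identifying $q_{\mathbf{w}}=T_{r_{\mathbf{w}}}$, using linearity to match $\mathcal{J}_{\mathrm{joint},d}$, and the integrability caveat for non-empirical state distributions.
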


\begin{proof}
Apply Lemma~\ref{lem:tilting-identity} with
$r=r_{\mathbf{w}}$ and take the expectation over $d_{\pi_b}$.
\end{proof}

\begin{corollary}[Reduction to Tilted-Target DOPD]
\label{cor:single-anchor-reduction}
If $K=1$ and $w_1=1$, then
$r_{\mathbf{w}}=r_1$,
$q_{\mathbf{w}}=T_{r_1}$, and
Eq.~\ref{eq:app-maps-objective-equivalence} reduces exactly to
Corollary~\ref{cor:offline-objective-equivalence}. If all $w_i=0$, then
$q_{\mathbf{w}}=b_s$.
\end{corollary}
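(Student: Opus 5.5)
The plan is a direct substitution into the definitions, handling the two cases of the statement separately.

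For the first case, $K=1$ and $w_1=1$, I will go through the chain of definitions in order.

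First, the joint score collapses to a single term: $r_{\mathbf{w}}(a\mid s)=w_1 r_1(a\mid s)=r_1(a\mid s)$ for every $a$ and $s$.

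Second, I substitute this into the expression for $Z_{\mathbf{w}}(s)$ in Proposition~\ref{prop:multi-anchor-target}. Comparing with Eq.~\ref{eq:app-partition-function} term by term gives $Z_{\mathbf{w}}(s)=Z_{r_1}(s)$. Comparing Eq.~\ref{eq:app-multi-anchor-target} with Eq.~\ref{eq:app-general-tilt} then gives $q_{\mathbf{w}}(\cdot\mid s)=T_{r_1}(\cdot\mid s)$. Alternatively, both are the unique maximizer of the same local objective $\Phi_s(\cdot;r_1)$, by Propositions~\ref{prop:single-anchor-target} and~\ref{prop:multi-anchor-target}, so they must coincide.

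Third, I compare the objectives. $\mathcal{J}_{\mathrm{joint},d}(\pi;\mathbf{w})$ in Eq.~\ref{eq:app-maps-offline-objective} becomes $\mathbb{E}_{s\sim d_{\pi_b}}[\Phi_s(\pi(\cdot\mid s);r_1)]=\mathcal{J}_d(\pi;r_1)$. The left-hand side of Eq.~\ref{eq:app-maps-objective-equivalence} becomes $\mathcal{L}_d(\pi;r_1)$, and the constant becomes $\alpha\,\mathbb{E}[\log Z_{r_1}(s)]$. Hence Eq.~\ref{eq:app-maps-objective-equivalence} is literally Eq.~\ref{eq:app-offline-equivalence} with $r=r_1$.

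For the second case, all $w_i=0$. Here $r_{\mathbf{w}}\equiv 0$, so each exponential factor equals $1$. Therefore $Z_{\mathbf{w}}(s)=\sum_a b_s(a)=1$, and $q_{\mathbf{w}}(a\mid s)=b_s(a)$.

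I do not expect a real obstacle; both cases are pure bookkeeping. The only point needing care is that "reduces exactly" is meant as an identity of every term in the two corollaries, not just of the targets. I will therefore check the partition function and the objective explicitly, in addition to $q_{\mathbf{w}}$. I will also note that the all-zero case lies outside the normalization $\sum_i w_i=1$ used in the main text, but it is admissible under the appendix's assumption $w_i\ge 0$.
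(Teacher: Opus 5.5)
Your proposal is correct and takes the same route as the paper, which proves both claims by direct substitution into Eq.~\ref{eq:app-multi-anchor-target}. Your explicit checks of $Z_{\mathbf{w}}$, $\mathcal{J}_{\mathrm{joint},d}$, and the left-hand side of Eq.~\ref{eq:app-maps-objective-equivalence} just spell out bookkeeping that the paper leaves implicit.
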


\begin{proof}
Both statements follow by substitution into
Eq.~\ref{eq:app-multi-anchor-target}.
\end{proof}

\subsection{Structural Properties of the Joint Target}
\label{app:maps-properties}

\begin{proposition}[Pairwise-odds decomposition]
\label{prop:pairwise-odds}
For any two actions $a,a'\in\mathcal{A}$,
\begin{equation}
    \log
    \frac{q_{\mathbf{w}}(a\mid s)}
         {q_{\mathbf{w}}(a'\mid s)}
    =
    \log\frac{b_s(a)}{b_s(a')}
    +
    \frac{1}{\alpha}
    \sum_{i=1}^{K}w_i
    \left[
        r_i(a\mid s)-r_i(a'\mid s)
    \right].
    \label{eq:app-pairwise-odds}
\end{equation}
\end{proposition}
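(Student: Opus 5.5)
The plan is to compute the log-odds directly from the closed form of the joint target in Proposition~\ref{prop:multi-anchor-target}. By Eq.~\ref{eq:app-multi-anchor-target}, for every action $a$,
\[
    \log q_{\mathbf{w}}(a\mid s)
    =
    \log b_s(a)
    +
    \frac{1}{\alpha}\sum_{i=1}^{K}w_i r_i(a\mid s)
    -
    \log Z_{\mathbf{w}}(s).
\]
All three terms on the right are finite. Assumption~1 gives $b_s(a)>0$. Assumption~2 gives that each $r_i(a\mid s)$ is finite. The remark after Eq.~\ref{eq:app-general-tilt}, applied to the finite score $r_{\mathbf{w}}$, gives $0<Z_{\mathbf{w}}(s)<\infty$. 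Since $q_{\mathbf{w}}(\cdot\mid s)$ is strictly positive, the ratio $q_{\mathbf{w}}(a\mid s)/q_{\mathbf{w}}(a'\mid s)$ is well defined and its logarithm is finite.

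Next, I will write the same expression for $a'$ and subtract the two. The normalizer $\log Z_{\mathbf{w}}(s)$ does not depend on the action, so it cancels. The base-policy terms combine into $\log\bigl(b_s(a)/b_s(a')\bigr)$. The score terms combine, by linearity of the finite sum, into $\frac{1}{\alpha}\sum_i w_i\bigl[r_i(a\mid s)-r_i(a'\mid s)\bigr]$. Together these give exactly Eq.~\ref{eq:app-pairwise-odds}.

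There is no real obstacle. The only point needing care is that every logarithm is finite, so the subtraction is legitimate, and the first paragraph handles this. As an optional remark, under Eq.~\ref{eq:app-exact-anchor-shift} each increment $r_i(a\mid s)-r_i(a'\mid s)$ can be rewritten as a difference of anchor log-odds, $\log\frac{\pi_i^+(a\mid s)}{\pi_i^+(a'\mid s)}-\log\frac{\pi_i^-(a\mid s)}{\pi_i^-(a'\mid s)}$. This makes explicit that each anchor pair shifts the base log-odds additively, scaled by $w_i/\alpha$.
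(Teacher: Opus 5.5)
Your proposal is correct and follows the paper's own proof: you take the logarithm of the closed-form target in Eq.~\ref{eq:app-multi-anchor-target} at $a$ and at $a'$ and subtract, so the shared normalizer $Z_{\mathbf{w}}(s)$ cancels. Your explicit check that every logarithm is finite, and the optional rewriting in terms of anchor log-odds, are sound additions that the paper leaves implicit.
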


\begin{proof}
Take the log-ratio of Eq.~\ref{eq:app-multi-anchor-target} for $a$ and
$a'$. The shared normalizer $Z_{\mathbf{w}}(s)$ cancels.
\end{proof}

Equation~\ref{eq:app-pairwise-odds} gives a precise interpretation of local
agreement and conflict. If all positively weighted anchors increase the score
of $a$ relative to $a'$, then the joint target increases the corresponding
odds relative to the behavior policy. If the anchors disagree, the signed
weighted sum---rather than any one anchor or an ordering of anchors---determines
the change in odds.

\begin{corollary}[Permutation, baseline, and scale properties]
\label{cor:permutation-scale}
The joint target is invariant to any simultaneous permutation of the anchor
scores and their weights. It is also invariant to replacing each score with
$r_i(a\mid s)+c_i(s)$ for any action-independent state baseline $c_i(s)$.
Moreover, it depends on $\mathbf{w}$ and $\alpha$ only through the ratios
$w_i/\alpha$. In particular, for every $c>0$,
\begin{equation}
    q_{c\mathbf{w}}^{(\alpha)}
    =
    q_{\mathbf{w}}^{(\alpha/c)},
    \label{eq:app-weight-temperature-equivalence}
\end{equation}
where the superscript denotes the KL coefficient used to construct the
target.
\end{corollary}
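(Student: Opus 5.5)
All three properties follow by substitution into the closed form of Eq.~\ref{eq:app-multi-anchor-target}. There $q_{\mathbf{w}}(\cdot\mid s)$ is determined by $b_s$ and the single function $a\mapsto \frac{1}{\alpha}\sum_i w_i r_i(a\mid s)$. The normalizer $Z_{\mathbf{w}}(s)$ is itself a function of that same exponent, so any change to the inputs that leaves this function unchanged, or changes it only by an action-independent amount, leaves $q_{\mathbf{w}}$ unchanged. The plan is to check each claimed invariance against this criterion.

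For \textbf{permutation invariance}, let $\sigma$ be a permutation of $\{1,\ldots,K\}$ and replace the pairs $(w_i,r_i)$ by $(w_{\sigma(i)},r_{\sigma(i)})$. The sum $\sum_i w_{\sigma(i)}r_{\sigma(i)}$ equals $\sum_i w_i r_i$ by reindexing, so the exponent, the normalizer, and the target are all unchanged.

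For \textbf{baseline invariance}, replacing $r_i$ by $r_i+c_i(s)$ adds $\frac{1}{\alpha}\sum_i w_i c_i(s)$ to the exponent for every action $a$. The resulting factor $\exp\bigl(\frac{1}{\alpha}\sum_i w_i c_i(s)\bigr)$ multiplies the numerator and also factors out of $Z_{\mathbf{w}}(s)$, so it cancels. An equivalent route is Proposition~\ref{prop:pairwise-odds}: the shifts cancel in every difference $r_i(a\mid s)-r_i(a'\mid s)$, so all pairwise odds are unchanged, and a distribution on $\mathcal{A}$ is determined by its pairwise odds together with normalization. Either argument needs only the finiteness of $c_i(s)$, which is what keeps the new scores within the standing assumptions.

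For \textbf{dependence only through $w_i/\alpha$}, write the exponent as $\sum_i (w_i/\alpha)\, r_i(a\mid s)$. Any two pairs $(\mathbf{w},\alpha)$ and $(\mathbf{w}',\alpha')$ with $w_i/\alpha=w_i'/\alpha'$ for all $i$ give the same exponent and hence the same target. Eq.~\ref{eq:app-weight-temperature-equivalence} is the special case $(c\mathbf{w},\alpha)$ versus $(\mathbf{w},\alpha/c)$, since $cw_i/\alpha=w_i/(\alpha/c)$.

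The only step needing care is the scale identity. The statement allows arbitrary nonnegative $\mathbf{w}$, so $c\mathbf{w}$ need not lie on the simplex, and we must make sure $q$ is defined there. Proposition~\ref{prop:multi-anchor-target} assumes only $w_i\ge 0$ and $\alpha/c>0$, so both sides are well-defined tilts of the form $T_r$ and the identity is a direct substitution.
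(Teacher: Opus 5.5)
Your proposal is correct and follows the paper's proof essentially step for step: permutation invariance by reindexing the sum, baseline invariance by cancelling the common factor $\exp\bigl(\sum_i w_i c_i(s)/\alpha\bigr)$ against the normalizer, and scale dependence by reading the exponent as $\sum_i (w_i/\alpha)\, r_i$. Your two additions are sound points the paper leaves implicit: the pairwise-odds alternative via Proposition~\ref{prop:pairwise-odds}, and the check that $c\mathbf{w}$ stays admissible because Proposition~\ref{prop:multi-anchor-target} requires only $w_i\ge 0$.
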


\begin{proof}
Permutation invariance follows from commutativity of addition. Adding the
state baselines multiplies every unnormalized target probability by the same
factor
$\exp(\sum_i w_i c_i(s)/\alpha)$, which cancels in the normalizer.
Finally, Eq.~\ref{eq:app-multi-anchor-target} depends on the weights and KL
coefficient only through $\sum_i(w_i/\alpha)r_i$.
\end{proof}

\begin{remark}[Joint target versus sequential training]
Corollary~\ref{cor:permutation-scale} concerns the explicitly constructed
distribution $q_{\mathbf{w}}$ at a fixed state. It does not imply that
sequentially training a parameterized model on separate anchor targets is
order invariant: different stages can change the model, the visited states,
and the approximation error before the next stage begins.
\end{remark}

\begin{remark}[Fixed point of the finite-candidate surrogate]
The implementation uses the squared log-residual loss in
Eq.~\ref{eq:finite-candidate-loss}, rather than evaluating the
full-distribution KL directly. At a fixed cached state, suppose
$\widehat b$, $\widehat q$, and $\widehat\pi_\theta$ are strictly positive
on the candidate-plus-residual outcome space, $m_b(s)>0$ is fixed, and the
numerical clamps are inactive. This loss is a positively weighted sum of
squared log residuals, so it is non-negative and equals zero if and only if
$\widehat\pi_\theta=\widehat q$. If $\widehat\pi_\theta$ is differentiable,
the loss gradient with respect to $\theta$ also vanishes at that target,
since every residual is zero. This applies to both single-anchor and
composed finite targets.
Thus, the surrogate preserves the zero-loss, zero-gradient property at its
finite target, but does not inherit the full-distribution KL objective
equivalence established above.
\end{remark}

\paragraph{Scope of the guarantees.}
The propositions establish exact target construction and objective
equivalence for finite aligned scores, a full action distribution, and the
fixed cached state distribution. They do not guarantee that a finite
parameterization represents every target exactly, that an optimizer reaches
the best point in that parameterization, or that cached states cover states
visited after a large policy change. They also do not by themselves imply an
improvement in task accuracy or response length; those are empirical
properties of the selected anchor shifts. Finally, finite-support scoring and
cross-tokenizer alignment introduce practical approximations to the
full-distribution setup analyzed here; Appendix~\ref{app:cross-tokenizer}
specifies those choices.

\section{Supplementary Experimental Details}
\label{app:experimental-details}

This section specifies the models, data, optimization, score alignment, and
evaluation protocols used in our experiments.  The method in
Section~\ref{sec:method} is stated over a common action space and full
distributions; below we distinguish this idealized definition from the
finite-candidate implementation used to cache policy shifts.

\subsection{Models, Anchor Pairs, and Training Data}
\label{app:models-data}

\paragraph{Models and anchors.}
We conduct experiments with Qwen3-1.7B, Qwen3-4B,
Qwen3-4B-Thinking-2507, Qwen3.5-4B, and OLMo-3-7B-Think as student
models.  We use three anchor pairs:
Qwen3-8B-Base $\rightarrow$ Klear-Reasoner-8B (Klear),
DeepSeek-R1-Distill-Qwen-1.5B $\rightarrow$ DECS-1.5B (DECS), and
MiMo-7B-Base $\rightarrow$ MiMo-7B-RL-0530 (MiMo-RL).  Klear and MiMo-RL
provide accuracy-oriented shifts, while DECS provides an
efficiency-oriented shift.

\paragraph{Mathematics training data.}
We select 3,200 prompts from the mathematics split of
Skywork-OR1-RL-Data~\citep{he2025skywork} and convert them to the DAPO prompt
format~\citep{yu2026dapo}.  Prompts exceeding the student context limit are
discarded.

\paragraph{Code training data.}
We select 3,200 prompts from
KlearReasoner-CodeSub-15K~\citep{su2025klear} after length filtering and
prompt deduplication.  We additionally screen the selected data against
LiveCodeBench v6 and find no overlapping problems.

\paragraph{Behavior trajectories.}
For each student and domain, we sample four responses per prompt from the
initial student $\pi_b$, using temperature 1, top-$p=1$, and a maximum of
2,048 response tokens.  This produces 12,800 trajectories.  All anchors score
the same cached student trajectories, which are subsequently reused
throughout training.

\subsection{Training Hyperparameters}
\label{app:training-hyperparameters}

Single- and multi-anchor runs share the settings in
Table~\ref{tab:appendix-training-hyperparameters}. Each run replays 12,800
cached trajectories for two epochs: 100 rounds of 256 trajectories, with four
optimizer updates per round at a global batch size of 64 trajectories
(400 updates in total). Mathematics and code checkpoints are trained
separately on their corresponding datasets.

\begin{table}[!htbp]
\centering
\caption{Shared training hyperparameters for single- and multi-anchor configurations.}
\label{tab:appendix-training-hyperparameters}
\begin{tabular}{@{}lr@{\hspace{1.5em}}lr@{}}
\toprule
\textbf{Hyperparameter} & \textbf{Value} & \textbf{Hyperparameter} & \textbf{Value} \\
\midrule
Replay rounds & 100 & Global batch size & 64 \\
Cached trajectories per round & 256 & Learning rate & $1.0{\times}10^{-6}$ \\
Training epochs & 2 & LR schedule & Constant \\
Maximum response length & 2,048 & Optimizer & Adam \\
Rollout temperature & 1.0 & Adam $(\beta_1,\beta_2)$ & $(0.9,0.999)$ \\
Rollout top-$p$ & 1.0 & Weight decay & 0.01 \\
$\alpha$ & 2.0 & Gradient clipping & 1.0 \\
\bottomrule
\end{tabular}
\end{table}

We use $\alpha=2.0$ for all single- and multi-anchor runs. Multi-anchor weights
are normalized to sum to one, with $w_i=1/K$ for equal composition of $K$
anchors. Consequently, a composition endpoint with one weight equal to one
uses the same target and training configuration as the corresponding
single-anchor run.

\subsection{Training Cost}
\label{app:training-cost}

We compare online and offline costs in four mathematics settings: Qwen3-1.7B
with two anchors and Qwen3-4B with one, two, or three anchors.
Runs use 100 rounds of 256
trajectories (64 prompts $\times$ four responses), capped at 2,048 tokens.
Offline training replays 12,800 unique cached trajectories for two epochs
(25,600 uses). All GPU stages use eight H100 GPUs.

\begin{table}[!b]
\centering
\caption{GPU-hour costs for one composition in each of four 100-round settings.
CPU target construction is excluded;
totals use unrounded stage costs.}
\label{tab:training-cost}
\small
\setlength{\tabcolsep}{6pt}
\begin{tabular}{@{}lrrrr@{}}
\toprule
Student & Qwen3-1.7B & \multicolumn{3}{c}{Qwen3-4B} \\
\cmidrule(l){2-2}\cmidrule(l){3-5}
Number of anchors & 2 & 1 & 2 & 3 \\
\midrule
Online DOPD & 15.65 & 15.49 & 17.64 & 19.66 \\
\method & \textbf{7.17} & \textbf{8.31} & \textbf{9.58} & \textbf{11.55} \\
Online/offline cost ratio & $2.18\times$ & $1.86\times$ & $1.84\times$ & $1.70\times$ \\
\midrule
\multicolumn{5}{@{}l}{\textit{\method breakdown}} \\
\quad Student rollout collection & 1.59 & 3.50 & 3.50 & 3.50 \\
\quad Student-reference scoring & 0.49 & 0.62 & 0.62 & 0.62 \\
\quad Anchor scoring & 2.93 & 1.76 & 2.93 & 4.95 \\
\quad Student-only training & 2.17 & 2.43 & 2.53 & 2.48 \\
\bottomrule
\end{tabular}
\end{table}

For the four Qwen3 settings, \method reduces measured GPU-hour cost per
composition by $1.70$--$2.18\times$ (Table~\ref{tab:training-cost}). Both
pipelines score each anchor pair, so this reduction need not grow with anchor
count. More importantly, \method avoids concurrent
anchor serving during student training: Online DOPD keeps all $2K$ anchor
policies in the training loop, whereas \method scores pairs sequentially
before student-only training.

Reusing cached trajectories, student-reference scores, and anchor shifts
across weight configurations makes additional compositions student-only GPU
training runs. Using the Qwen3-4B three-anchor costs, we project five compositions at
$9.07 + 5\times2.48 = 21.47$ GPU hours with \method, compared with
$5\times19.66 = 98.30$ GPU hours online: a $4.58\times$ amortized cost
reduction, assuming unchanged training cost per composition.

\subsection{Cross-Tokenizer Policy-Shift Alignment}
\label{app:cross-tokenizer}

Some students and anchors have different tokenizers, so a student action need
not be an atomic action under an anchor.  We align decisions through the
serialized byte-level token strings, which define a tokenizer-independent
view of the text prefix.  For a generated student prefix $s_t$, we first
losslessly project the entire observed prefix into each anchor tokenizer.
Non-atomic student tokens may decompose into several anchor tokens for this
context construction.  Added tokens are accepted only when decoding and
re-encoding preserves their text exactly.

For a candidate student action $a$, let $\phi_i(a)$ denote its representation
under anchor $i$.  We retain the position for anchor $i$ only if every cached
student candidate maps atomically, i.e., $\phi_i(a)$ is exactly one anchor
token for all 16 candidates.  Its score is then evaluated at the anchor
boundary immediately preceding that token:
\begin{equation}
    \widehat r_i(a\mid s_t)
    =
    \log \pi_i^+\!\left(\phi_i(a)\mid\phi_i(s_t)\right)
    -
    \log \pi_i^-\!\left(\phi_i(a)\mid\phi_i(s_t)\right).
    \label{eq:cross-tokenizer-shift}
\end{equation}
This rule deliberately masks a boundary at which a candidate expands into
multiple anchor tokens; it does not substitute the likelihood of the first
piece, whose meaning would depend on future pieces.  It also ensures that the
pre- and post-anchor likelihoods are conditioned on exactly the same text
prefix.  For a multi-anchor target, we take the logical intersection of the
anchor masks and sum the valid shifts at the corresponding student decision,
as in Eq.~\ref{eq:composed-shift}.  A masked position is excluded from both
the loss and its token normalization; there is no fallback score.  Across the
Klear--DECS experiments in the main table, 93.8--100\% of generated token
positions survive this test, and every cached trajectory contains at least
one valid position.

\paragraph{Finite-candidate target.}
The full-vocabulary target in Eq.~\ref{eq:maps-target} is the conceptual
objective.  In the implementation, we preserve the behavior probability of
the 16 cached student candidates and aggregate all remaining vocabulary mass
into one \emph{other} outcome.  Candidate probabilities are gathered from
full-vocabulary-normalized logits; they are not renormalized within the top
16.  For the single-anchor case, the cached target over
$\mathcal{C}(s)\cup\{\mathrm{other}\}$ is
\begin{align}
    \widehat q(a\mid s)
    &\propto
    \pi_b(a\mid s)\exp\!\left(\widehat r(a\mid s)/\alpha\right),
    &&a\in\mathcal{C}(s), \nonumber\\
    \widehat q(\mathrm{other}\mid s)
    &\propto
    1-\sum_{a\in\mathcal{C}(s)}\pi_b(a\mid s),
    \label{eq:finite-candidate-target}
\end{align}
where the unobserved residual is assigned zero shift.  For multiple anchors,
$\widehat r$ is replaced by $\sum_iw_i\widehat r_i$ after intersecting the
validity masks.  Retaining the residual outcome avoids treating the cached
top-16 set as the complete vocabulary and makes the approximation exact when
all non-candidate actions have zero composed shift.

We fit this finite target with a squared log-residual surrogate.  Let
$\widehat b$ and $\widehat\pi_\theta$ denote the behavior and current-student
distributions over the 17 outcomes, and let
$m_b(s)=\sum_{a\in\mathcal{C}(s)}\pi_b(a\mid s)$ be the behavior probability
mass covered by the cached candidates.  The implemented per-state loss is
\begin{equation}
    \widehat{\mathcal{L}}_{\mathrm{TT}}(s)
    =
    \frac{\alpha}{2m_b(s)}
    \sum_{z\in\mathcal{C}(s)\cup\{\mathrm{other}\}}
    \widehat b(z\mid s)
    \left[
        \log\widehat\pi_\theta(z\mid s)
        -
        \log\widehat q(z\mid s)
    \right]^2.
    \label{eq:finite-candidate-loss}
\end{equation}
We clamp residual masses below $10^{-8}$ and log arguments below $10^{-30}$
for numerical stability.  Equation~\ref{eq:finite-candidate-loss} is a
finite-support surrogate for, rather than a numerical evaluation of, the
full-vocabulary KL in Eq.~\ref{eq:maps-loss}.  It preserves the desired
fixed point because its value and gradient vanish at
$\widehat\pi_\theta=\widehat q$.  The division by $m_b(s)$ additionally makes
its initialization gradient match the candidate-renormalized Direct-OPD
surrogate used to define the cached update.

\subsection{Benchmarks and Test Sets}
\label{app:benchmarks}

\paragraph{Mathematics.}
We evaluate on AIME 2024, AIME 2025, and HMMT February 2025.  Each benchmark
contains 30 competition-level mathematics problems covering topics such as
algebra, geometry, combinatorics, and number theory.

\paragraph{Code.}
We evaluate code generation on LiveCodeBench~\citep{jain2025livecodebench},
which contains recently released competitive-programming problems and reduces
the risk of test-set contamination.  Following the official release filters,
LCB v5 and v6 contain 279 and 131 problems, respectively.  All methods are
evaluated on the same problem sets.

\subsection{Evaluation Prompts and Answer Extraction}
\label{app:evaluation-prompts}

\paragraph{Mathematics prompt.}
Before application of the model's native chat template, each mathematics
example is represented exactly as
\begin{quote}
\small\ttfamily\raggedright
Question: \{Problem\}\\
Answer:
\end{quote}
We enable the model's native reasoning mode where available and stop on its
end-of-sequence or end-of-turn token.  For every problem we independently
sample 64 responses at temperature 0.6 and top-$p=0.95$, allowing at most
32,768 generated tokens.  We extract the last balanced
\texttt{\textbackslash boxed\{...\}} expression, remove superficial
formatting such as whitespace, delimiters, and units, and canonicalize integer
answers before exact comparison with the reference.  We do not use a
computer-algebra system to grant additional symbolic equivalences.  A
length-truncated response is still scored if it contains a valid extracted
answer.

\paragraph{LiveCodeBench prompt.}
We use LiveCodeBench's \texttt{CodeQwenInstruct} generation format.  Its user
message begins with:
\begin{quote}
\small\ttfamily\raggedright
You will be given a question (problem specification) and will generate a
correct Python program that matches the specification and passes all tests.
You will NOT return anything except for the program.\\[2pt]
Question: \{question\_content\}
\end{quote}
For problems with starter code, the message then asks the model to complete
the supplied code and enclose the answer in code delimiters.  Otherwise it
instructs the model to read from standard input, write to standard output, and
return the program inside a Python code block initialized with
\texttt{\# YOUR CODE HERE}.  The official system wrapper is
\texttt{You are a helpful assistant.}, expressed with the Qwen instruction
delimiters.  We sample four responses per problem at temperature 0.6 and
top-$p=0.95$, with a maximum of 40,960 generated tokens.

We use LiveCodeBench's official code extraction and test execution.  A sample
is correct only if its extracted program passes every test for that problem.
Tests run in isolated, read-only containers with the benchmark's fast-test
mode and a six-second timeout per test.  Runtime errors, timeouts, extraction
failures, and incomplete programs are scored as incorrect.

\subsection{Metrics and Aggregation}
\label{app:metrics}

Let $c_{j,r}\in\{0,1\}$ indicate whether response $r$ to benchmark problem
$j$ is correct.  For a benchmark with $N$ problems and $R$ samples per
problem, we report
\begin{equation}
    \operatorname{Acc}
    =
    \frac{100}{NR}\sum_{j=1}^{N}\sum_{r=1}^{R}c_{j,r}.
    \label{eq:evaluation-accuracy}
\end{equation}
Here $R=64$ for mathematics and $R=4$ for LiveCodeBench.  Code accuracy is
therefore an average over four generations (Avg@4), not pass@4: a problem with
one successful sample contributes $1/4$, rather than being counted as fully
solved.  Mean response length is computed over the same $NR$ samples using
the evaluated student's tokenizer and counts generated tokens only.

The Average accuracy and response-token columns are unweighted arithmetic
means of the five benchmark-level values. Following
O1-Pruner~\citep{luo-etal-2026-o1}, AES compares a trained policy with its
corresponding base student. We compute the score separately for each
benchmark before averaging. For benchmark $k$, let
\begin{equation}
    \Delta_{L,k}=\frac{L_{\mathrm{base},k}-L_k}{L_{\mathrm{base},k}},
    \qquad
    \Delta_{A,k}=\frac{A_k-A_{\mathrm{base},k}}{A_{\mathrm{base},k}},
\end{equation}
where $A_k$ and $L_k$ are the policy's accuracy and mean response length,
and $A_{\mathrm{base},k}$ and $L_{\mathrm{base},k}$ are those of its
corresponding base student under the same evaluation protocol.
The reported AES is the unweighted mean of the five benchmark-level scores:
\begin{equation}
    \operatorname{AES}
    = \frac{1}{5}\sum_{k=1}^{5}
    \left[
    \begin{cases}
        \Delta_{L,k} + 3|\Delta_{A,k}|, & \Delta_{A,k}\geq 0,\\
        \Delta_{L,k} - 5|\Delta_{A,k}|, & \Delta_{A,k}<0.
    \end{cases}
    \right]
    \label{eq:aes}
\end{equation}
Thus, AES rewards both accuracy gains and token savings, while penalizing an
accuracy loss more strongly than an equal relative gain.

\subsection{Implementation Details}
\label{app:reproducibility}

We implement our method in Slime.  Qwen models are trained with Megatron-Core,
while OLMo is trained with PyTorch FSDP.  All of our training runs use
bfloat16 precision on a single eight-GPU node.  We use seed 42 for behavior
generation and seed 1234 for training, and report the final checkpoint of each
run.
Each configuration is trained once; the repeated generations used for
evaluation measure decoding variability rather than training-seed variance.

\section{Additional Experimental Results}
\label{app:additional-results}

We provide additional results on the accuracy--efficiency trade-off and anchor
generalization, including the five-point Qwen3-4B composition sweep and
MiMo-RL compositions on Qwen3-1.7B and Qwen3.5-4B.

\subsection{Klear--DECS Operating Points}
\label{app:complete-tradeoff}

\begin{table}[H]
\centering
\caption{Qwen3-4B Klear--DECS weight sweep across mathematics and code.
We set $w_{\mathrm{DECS}}=\rho_{\mathrm{DECS}}$ and
$w_{\mathrm{Klear}}=1-\rho_{\mathrm{DECS}}$, so the capability weights sum to
one. Mathematics uses Avg@64 and code uses Avg@4.}
\label{tab:appendix-q4-weight-sweep}
\resizebox{\textwidth}{!}{%
\begin{tabular}{rcccccccccc}
\toprule
\multirow{2}[4]{*}{$\boldsymbol{\rho_{\mathrm{DECS}}}$} &
    \multicolumn{2}{c}{\textbf{AIME 2024}} &
    \multicolumn{2}{c}{\textbf{AIME 2025}} &
    \multicolumn{2}{c}{\textbf{HMMT 2025}} &
    \multicolumn{2}{c}{\textbf{LCB v5}} &
    \multicolumn{2}{c}{\textbf{LCB v6}} \\
\cmidrule(r){2-3}\cmidrule(r){4-5}\cmidrule(r){6-7}\cmidrule(r){8-9}\cmidrule(l){10-11}
& \textbf{Acc.}$\uparrow$ & \textbf{\#Tok.}$\downarrow$
& \textbf{Acc.}$\uparrow$ & \textbf{\#Tok.}$\downarrow$
& \textbf{Acc.}$\uparrow$ & \textbf{\#Tok.}$\downarrow$
& \textbf{Acc.}$\uparrow$ & \textbf{\#Tok.}$\downarrow$
& \textbf{Acc.}$\uparrow$ & \textbf{\#Tok.}$\downarrow$ \\
\midrule
0.250 & 78.8 & 14,466 & 74.3 & 17,236 & 48.8 & 17,420 & 59.1 & 13,165 & 52.9 & 13,719 \\
0.375 & 77.4 & 12,998 & 73.2 & 15,317 & 48.7 & 15,372 & 59.6 & 12,406 & 50.8 & 12,652 \\
0.500 & 76.0 & 11,521 & 70.1 & 13,816 & 43.0 & 13,743 & 59.2 & 11,702 & 51.3 & 12,357 \\
0.625 & 74.2 & 10,374 & 66.1 & 12,328 & 38.8 & 12,207 & 57.8 & 11,036 & 49.4 & 11,585 \\
0.750 & 70.0 &  9,566 & 62.3 & 11,131 & 34.7 & 10,595 & 55.6 & 10,705 & 46.9 & 11,284 \\
\bottomrule
\end{tabular}%
}
\end{table}

Table~\ref{tab:appendix-q4-weight-sweep} reports the five standardized
composition fractions for the Qwen3-4B sweep, presenting mathematics and code
in a single view. Increasing $\rho_{\mathrm{DECS}}$ generally shortens
responses, while lower and intermediate fractions preserve higher accuracy.
The consistent movement across benchmarks shows that the composition fraction
provides a controllable accuracy--efficiency axis across student policies.

\subsection{Generalization with MiMo-RL}
\label{app:mimo-generalization}

\begin{table}[!htbp]
\centering
\caption{Additional anchor-composition results on Qwen3-1.7B and
Qwen3.5-4B. We report accuracy (Acc.; \%) and mean response tokens (\#Tok.).
Mathematics uses Avg@64, while LCB results marked with $^\ast$ use Pass@1.
Within each student block, the best and second-best results are marked in
\textbf{bold} and \underline{underlined}, respectively.}
\label{tab:appendix-mimo-generalization}
\resizebox{\textwidth}{!}{%
\begin{tabular}{lccccccccccccc}
\toprule
\multirow{2}[4]{*}{\textbf{Anchors}} &
    \multicolumn{2}{c}{\textbf{AIME 2024}} &
    \multicolumn{2}{c}{\textbf{AIME 2025}} &
    \multicolumn{2}{c}{\textbf{HMMT 2025}} &
    \multicolumn{2}{c}{\textbf{LCB v5}$^\ast$} &
    \multicolumn{2}{c}{\textbf{LCB v6}$^\ast$} &
    \multicolumn{3}{c}{\textbf{Average}} \\
\cmidrule(r){2-3}\cmidrule(r){4-5}\cmidrule(r){6-7}\cmidrule(r){8-9}\cmidrule(r){10-11}\cmidrule(l){12-14}
& \textbf{Acc.}$\uparrow$ & \textbf{\#Tok.}$\downarrow$
& \textbf{Acc.}$\uparrow$ & \textbf{\#Tok.}$\downarrow$
& \textbf{Acc.}$\uparrow$ & \textbf{\#Tok.}$\downarrow$
& \textbf{Acc.}$\uparrow$ & \textbf{\#Tok.}$\downarrow$
& \textbf{Acc.}$\uparrow$ & \textbf{\#Tok.}$\downarrow$
& \textbf{Acc.}$\uparrow$ & \textbf{\#Tok.}$\downarrow$
& \textbf{AES}$\uparrow$ \\
\midrule
\multicolumn{14}{l}{\textbf{\textit{Qwen3-1.7B}}} \\
Base
    & 45.8 & 17,564 & 40.4 & 17,537 & 21.7 & 17,570
    & 36.9 & 14,663 & 28.2 & 14,868
    & 34.62 & 16,440 & 0.00 \\
Klear-only
    & 53.2 & 18,457 & \underline{41.7} & 19,178
    & \underline{24.1} & 19,751 & 36.6 & 15,206
    & \underline{32.8} & 15,288
    & 37.68 & 17,576 & 0.20 \\
DECS-only
    & 46.2 & \textbf{14,558} & 34.9 & \textbf{14,534}
    & 20.9 & \textbf{14,160} & 31.9 & \textbf{13,518}
    & 31.3 & \textbf{13,458}
    & 33.05 & \textbf{14,046} & $-0.08$ \\
MiMo-RL
    & 51.8 & 19,352 & 38.8 & 20,739 & 23.8 & 19,927
    & 36.6 & 15,849 & 29.8 & 15,882
    & 36.14 & 18,350 & 0.02 \\
Klear+DECS
    & \underline{55.1} & \underline{15,289}
    & 40.8 & \underline{16,151}
    & 22.9 & \underline{15,855}
    & \underline{38.4} & \underline{13,963}
    & \underline{32.8} & 13,870
    & \underline{38.00} & \underline{15,026} & \underline{0.38} \\
Klear+MiMo-RL
    & \textbf{60.2} & 22,004 & \textbf{47.0} & 23,792
    & \textbf{30.0} & 24,171 & \textbf{42.7} & 17,568
    & \textbf{35.1} & 17,798
    & \textbf{42.99} & 21,067 & \textbf{0.44} \\
DECS+MiMo-RL
    & 53.5 & 16,035 & 37.6 & 16,348 & 23.5 & 16,021
    & \underline{38.4} & 14,175 & 30.5 & \underline{13,627}
    & 36.70 & 15,241 & 0.25 \\
\midrule
\multicolumn{14}{l}{\textbf{\textit{Qwen3.5-4B}}} \\
Base
    & 72.9 & 18,316 & 63.5 & 20,322 & 59.2 & 26,481
    & 41.9 & 27,209 & 36.6 & 26,500
    & 54.84 & 23,766 & 0.00 \\
Klear-only
    & 78.2 & 21,522 & 66.7 & 23,346 & 56.7 & 27,096
    & 50.9 & 24,951 & 45.0 & 26,200
    & 59.49 & 24,623 & 0.22 \\
DECS-only
    & 64.2 & 15,052 & 55.8 & 17,335 & 52.1 & 22,228
    & 45.9 & \textbf{22,570} & 38.9 & \textbf{23,495}
    & 51.38 & 20,136 & $-0.16$ \\
MiMo-RL
    & \underline{80.4} & 20,770 & 68.5 & 21,975
    & 58.8 & 25,490 & 40.1 & 26,653 & 37.4 & 27,738
    & 57.04 & 24,525 & 0.09 \\
Klear+DECS
    & \textbf{84.2} & 18,900 & \textbf{74.6} & 20,757
    & \textbf{64.0} & 23,635 & \textbf{57.0} & 24,404
    & \textbf{52.7} & 25,665
    & \textbf{66.47} & 22,672 & \textbf{0.68} \\
Klear+MiMo-RL
    & 78.8 & \textbf{12,789} & \underline{68.9} & \textbf{14,496}
    & \underline{63.6} & \textbf{15,874}
    & \underline{54.1} & 25,628 & 46.6 & 26,243
    & \underline{62.40} & \textbf{19,006} & \underline{0.61} \\
DECS+MiMo-RL
    & 78.3 & \underline{13,314} & 68.5 & \underline{14,931}
    & 62.5 & \underline{18,248}
    & 53.4 & \underline{24,374}
    & \underline{47.3} & \underline{24,936}
    & 62.01 & \underline{19,161} & 0.59 \\
\bottomrule
\end{tabular}%
}
\end{table}

Table~\ref{tab:appendix-mimo-generalization} extends the MiMo-RL compositions
in Table~\ref{tab:anchor-generalization} to two additional students. On
Qwen3-1.7B, Klear+MiMo-RL achieves the highest average accuracy and AES, while
DECS+MiMo-RL improves average accuracy over the base model by 2.08 points with
7.3\% fewer tokens. On Qwen3.5-4B, Klear+MiMo-RL and DECS+MiMo-RL improve
average accuracy by 7.56 and 7.17 points while reducing response tokens by
20.0\% and 19.4\%, respectively. These results demonstrate that capability
composition with MiMo-RL transfers across student sizes and anchor pairings.

% In the preprint, allow page breaks between complete case panels instead of
% moving both panels to a new page as one large float. Keep the original
% floating figures when restoring the ICLR submission.
\ifpreprint
  \makeatletter
  \newenvironment{casefigure}{%
    \par\addvspace{\intextsep}%
    \def\@captype{figure}%
    \centering
  }{%
    \par\addvspace{\intextsep}%
  }
  \makeatother
\else
  \newenvironment{casefigure}{\begin{figure}[!htbp]\centering}{\end{figure}}
\fi

\section{Case Studies}
\label{app:case-studies}

Aggregate metrics show the overall accuracy--efficiency trade-off, but do not
reveal how different anchor shifts change an individual response. Following
the qualitative-example format used in prior reasoning-efficiency
work~\citep{jiang2026overthinking,luo-etal-2026-o1}, each case below foregrounds
the question and matched policy responses in a single panel. We select the
mathematics cases from the shared Qwen3-4B AIME 2024 evaluation and the code
case from the corresponding LCB v5 evaluation, using fixed correctness and
length criteria before inspecting the response text. Each panel reproduces
the beginning and decisive ending of the sampled response, starting from
\texttt{<think>}. The model's full response length appears in the header, and
the bracketed omission marker reports the exact full-response length under
the student tokenizer. We normalize whitespace and mathematical typesetting
only. These examples illustrate the aggregate trends rather than establish
causal effects.

\subsection{Combining Complementary Accuracy Signals}
\label{app:case-complementary-accuracy}

\begin{casefigure}
\begingroup
\setlength{\fboxsep}{2pt}
\setlength{\fboxrule}{0.5pt}

\fcolorbox{black!25}{white}{%
\begin{minipage}{0.965\linewidth}
\scriptsize
\raggedright
\colorbox{blue!12}{%
\parbox{\dimexpr\linewidth-2\fboxsep\relax}{%
\textbf{\textcolor{blue!65!black}{CASE A}}
\hspace{0.4em}\textbf{MiMo-RL supplies the correct behavior}
\hfill\textbf{AIME 2024 I-5, Sample 22}}}
\par\vspace{2pt}
\colorbox{black!5}{%
\parbox{\dimexpr\linewidth-2\fboxsep\relax}{%
\textbf{Question.}
Rectangles $ABCD$ and $EFGH$ satisfy that $D,E,C,F$ are collinear and
$A,D,H,G$ are concyclic. Given $BC=16$, $AB=107$, $FG=17$, and $EF=184$,
find $CE$. \hfill\textbf{Reference: 104}}}
\par\vspace{3pt}

\colorbox{red!6}{%
\begin{minipage}{\dimexpr\linewidth-2\fboxsep\relax}
\textbf{Klear-only}\hfill
\textcolor{red!70!black}{\textbf{Incorrect: 107}}
\hspace{0.6em}\textbf{29,849 tokens}
\par
\texttt{<think>} Okay, so I need to solve this geometry problem. Let me read
it again carefully. We have two rectangles, $ABCD$ and $EFGH$ \ldots
\hfill\textcolor{black!55}{\emph{[middle omitted; full response: 29,849 tokens]}}
\par
Then \(CE\approx|199-92.092|=106.908\). This is very close to 107, and the
intended answer is likely \(\boxed{107}\).
\end{minipage}}
\par\vspace{2pt}
\colorbox{green!7}{%
\begin{minipage}{\dimexpr\linewidth-2\fboxsep\relax}
\textbf{MiMo-RL}\hfill
\textcolor{green!40!black}{\textbf{Correct: 104}}
\hspace{0.6em}\textbf{14,535 tokens}
\par
\texttt{<think>} Okay, let me try to figure out this geometry problem. So,
we have two rectangles, $ABCD$ and $EFGH$ \ldots
\hfill\textcolor{black!55}{\emph{[middle omitted; full response: 14,535 tokens]}}
\par
The problem assumes that the points are ordered as $D,E,C,F$. Therefore, we
take the case where $e=3$, which gives \(\boxed{104}\).
\end{minipage}}
\par\vspace{2pt}
\colorbox{blue!8}{%
\begin{minipage}{\dimexpr\linewidth-2\fboxsep\relax}
\textbf{Klear+MiMo-RL}\hfill
\textcolor{green!40!black}{\textbf{Correct: 104}}
\hspace{0.6em}\textbf{15,659 tokens}
\par
\texttt{<think>} Okay, so I need to solve this geometry problem. Let me read
it again carefully. We have two rectangles, $ABCD$ and $EFGH$ \ldots
\hfill\textcolor{black!55}{\emph{[middle omitted; full response: 15,659 tokens]}}
\par
We must have $E=(3,0)$ and $F=(187,0)$, so that $0<3<107<187$. Therefore,
\(CE=|107-3|=104\). The four points satisfy the circle equation.
\textbf{Final Answer:} \(\boxed{104}\).
\end{minipage}}
\end{minipage}}

\par\penalty0\vspace{5pt}

\fcolorbox{black!25}{white}{%
\begin{minipage}{0.965\linewidth}
\scriptsize
\raggedright
\colorbox{blue!12}{%
\parbox{\dimexpr\linewidth-2\fboxsep\relax}{%
\textbf{\textcolor{blue!65!black}{CASE B}}
\hspace{0.4em}\textbf{Klear-only supplies the correct behavior}
\hfill\textbf{AIME 2024 I-13, Sample 10}}}
\par\vspace{2pt}
\colorbox{black!5}{%
\parbox{\dimexpr\linewidth-2\fboxsep\relax}{%
\textbf{Question.}
Let $p$ be the least prime for which $p^2$ divides $n^4+1$ for some positive
integer $n$. Find the least positive $m$ for which $p^2$ divides $m^4+1$.
\hfill\textbf{Reference: 110}}}
\par\vspace{3pt}

\colorbox{green!7}{%
\begin{minipage}{\dimexpr\linewidth-2\fboxsep\relax}
\textbf{Klear-only}\hfill
\textcolor{green!40!black}{\textbf{Correct: 110}}
\hspace{0.6em}\textbf{15,507 tokens}
\par
\texttt{<think>} Okay, so I need to find the least prime number $p$ such that
there's a positive integer $n$ where $n^4+1$ is divisible by $p^2$ \ldots
\hfill\textcolor{black!55}{\emph{[middle omitted; full response: 15,507 tokens]}}
\par
\(m=8+17\cdot6=110\). This is the smallest solution among the lifts of the
four solutions modulo 17. \textbf{Final Answer:} \(\boxed{110}\).
\end{minipage}}
\par\vspace{2pt}
\colorbox{red!6}{%
\begin{minipage}{\dimexpr\linewidth-2\fboxsep\relax}
\textbf{MiMo-RL}\hfill
\textcolor{red!70!black}{\textbf{Incorrect: 134}}
\hspace{0.6em}\textbf{14,821 tokens}
\par
\texttt{<think>} Okay, so I need to find the least prime number $p$ such that
there's a positive integer $n$ where $n^4+1$ is divisible by $p^2$ \ldots
\hfill\textcolor{black!55}{\emph{[middle omitted; full response: 14,821 tokens]}}
\par
\(m=15+17\cdot7=134\). So the least positive integer $m$ such that
\(m^4+1\equiv0\pmod{17^2}\) is \(\boxed{134}\).
\end{minipage}}
\par\vspace{2pt}
\colorbox{blue!8}{%
\begin{minipage}{\dimexpr\linewidth-2\fboxsep\relax}
\textbf{Klear+MiMo-RL}\hfill
\textcolor{green!40!black}{\textbf{Correct: 110}}
\hspace{0.6em}\textbf{18,549 tokens}
\par
\texttt{<think>} Okay, so I need to find the least prime number $p$ such that
there's a positive integer $n$ where $n^4+1$ is divisible by $p^2$ \ldots
\hfill\textcolor{black!55}{\emph{[middle omitted; full response: 18,549 tokens]}}
\par
\(110^4+1\equiv0\pmod{289}\). Since it is the smallest among the four lifts,
it is the least positive integer $m$. \textbf{Final Answer:} \(\boxed{110}\).
\end{minipage}}
\end{minipage}}
\endgroup
\par\nobreak
\caption{Complementary accuracy cases. MiMo-RL alone solves Case A and
Klear-only alone solves Case B; Klear+MiMo-RL solves both.}
\label{fig:case-accuracy-composition}
\end{casefigure}

Figure~\ref{fig:case-accuracy-composition} shows complementary outcomes:
MiMo-RL supplies the correct answer in Case A, whereas Klear-only does so in
Case B. Klear+MiMo-RL solves both, preserving behavior contributed by either
accuracy-oriented anchor.

\FloatBarrier
\subsection{Combining Accuracy and Efficiency}
\label{app:case-accuracy-efficiency}

Figure~\ref{fig:case-accuracy-efficiency} shows the same composition pattern
across mathematics and code: Klear-only is correct but long, DECS-only is
shorter but incorrect, and Klear+DECS is both correct and shortest.

\begin{casefigure}
\begingroup
\setlength{\fboxsep}{2pt}
\setlength{\fboxrule}{0.5pt}

\fcolorbox{black!25}{white}{%
\begin{minipage}{0.965\linewidth}
\ifpreprint
  \fontsize{7.5}{8.5}\selectfont
\else
  \scriptsize
\fi
\raggedright
\colorbox{blue!12}{%
\parbox{\dimexpr\linewidth-2\fboxsep\relax}{%
\textbf{\textcolor{blue!65!black}{CASE C}}
\hspace{0.4em}\textbf{Accuracy--efficiency composition in mathematics}
\hfill\textbf{AIME 2024 I-14, Sample 28}}}
\par\vspace{2pt}
\colorbox{black!5}{%
\parbox{\dimexpr\linewidth-2\fboxsep\relax}{%
\textbf{Question.}
A tetrahedron has opposite edge-length pairs
$(\sqrt{41},\sqrt{41})$, $(\sqrt{80},\sqrt{80})$, and
$(\sqrt{89},\sqrt{89})$. Its inradius is $m\sqrt{n}/p$ in reduced form.
Find $m+n+p$. \hfill\textbf{Reference: 104}}}
\par\vspace{3pt}

\colorbox{green!7}{%
\begin{minipage}{\dimexpr\linewidth-2\fboxsep\relax}
\textbf{Klear-only}\hfill
\textcolor{green!40!black}{\textbf{Correct: 104}}
\hspace{0.6em}\textbf{21,323 tokens}
\par
\texttt{<think>} Okay, so I need to find the distance from a point $I$ inside
the tetrahedron $ABCD$ to each of its faces \ldots
\hfill\textcolor{black!55}{\emph{[middle omitted; full response: 21,323 tokens]}}
\par
\(m=20\), \(n=21\), and \(p=63\). \textbf{Final Answer:}
\(m+n+p=20+21+63=\boxed{104}\).
\end{minipage}}
\par\vspace{2pt}
\colorbox{red!6}{%
\begin{minipage}{\dimexpr\linewidth-2\fboxsep\relax}
\textbf{DECS-only}\hfill
\textcolor{red!70!black}{\textbf{Incorrect: 125}}
\hspace{0.6em}\textbf{10,101 tokens}
\par
\texttt{<think>} Okay, so I need to find the distance from a point $I$ inside
the tetrahedron $ABCD$ to each of its faces \ldots
\hfill\textcolor{black!55}{\emph{[middle omitted; full response: 10,101 tokens]}}
\par
\(m=20\), \(n=42\), and \(p=63\). \textbf{Final Answer:}
\(m+n+p=20+42+63=\boxed{125}\).
\end{minipage}}
\par\vspace{2pt}
\colorbox{blue!8}{%
\begin{minipage}{\dimexpr\linewidth-2\fboxsep\relax}
\textbf{Klear+DECS}\hfill
\textcolor{green!40!black}{\textbf{Correct: 104}}
\hspace{0.6em}\textbf{7,601 tokens}
\par
\texttt{<think>} Okay, so I need to solve this geometry problem about a
tetrahedron. The tetrahedron is $ABCD$ with specific edge lengths \ldots
\hfill\textcolor{black!55}{\emph{[middle omitted; full response: 7,601 tokens]}}
\par
The distance is \(\frac{20\sqrt{21}}{63}\). This is in the required form with
$m=20$, $n=21$, and $p=63$. \textbf{Final Answer:}
\(m+n+p=20+21+63=\boxed{104}\).
\end{minipage}}
\end{minipage}}

\par\penalty0\vspace{5pt}

\fcolorbox{black!25}{white}{%
\begin{minipage}{0.965\linewidth}
\ifpreprint
  \fontsize{7.5}{8.5}\selectfont
\else
  \scriptsize
\fi
\raggedright
\colorbox{blue!12}{%
\parbox{\dimexpr\linewidth-2\fboxsep\relax}{%
\textbf{\textcolor{blue!65!black}{CASE D}}
\hspace{0.4em}\textbf{Accuracy--efficiency composition in code}
\hfill\textbf{LCB v5, Question 3487, Sample 2}}}
\par\vspace{2pt}
\colorbox{black!5}{%
\parbox{\dimexpr\linewidth-2\fboxsep\relax}{%
\textbf{Question: Find Maximum Removals From Source String.}
Given \texttt{source}, a subsequence \texttt{pattern}, and
\texttt{targetIndices}, remove the maximum number of indexed characters while
keeping \texttt{pattern} a subsequence. Here \(n\leq3{,}000\).}}
\par\vspace{3pt}

\colorbox{green!7}{%
\begin{minipage}{\dimexpr\linewidth-2\fboxsep\relax}
\textbf{Klear-only}\hfill
\textcolor{green!40!black}{\textbf{Correct: Pass}}
\hspace{0.6em}\textbf{17,076 tokens}
\par
\texttt{<think>} Okay, let's see. The problem is about finding the maximum
number of operations we can perform where each operation removes a character
from \texttt{source} \ldots
\hfill\textcolor{black!55}{\emph{[middle omitted; full response: 17,076 tokens]}}
\par
We use a dynamic programming approach to find the minimum cost of indices in
\texttt{targetIndices} that are used in forming the pattern:
{\ttfamily dp = [[INF] * (m + 1) for \_ in range(n + 1)];\;
return len(targetIndices) - dp[n][m].}
\end{minipage}}
\par\vspace{2pt}
\colorbox{red!6}{%
\begin{minipage}{\dimexpr\linewidth-2\fboxsep\relax}
\textbf{DECS-only}\hfill
\textcolor{red!70!black}{\textbf{Incorrect: Wrong Answer}}
\hspace{0.6em}\textbf{15,726 tokens}
\par
\texttt{<think>} Okay, I need to solve this problem where I have to find the
maximum number of operations I can perform by removing characters from the
source string \ldots
\hfill\textcolor{black!55}{\emph{[middle omitted; full response: 15,726 tokens]}}
\par
But the example says 1. This suggests that the code is incorrect. \ldots\
Given the time constraints, I will proceed with the code that checks for the
maximum \(k\) by removing the last \(k\) indices:
{\ttfamily removed = set(targetIndices[-mid:]).}
\end{minipage}}
\par\vspace{2pt}
\colorbox{blue!8}{%
\begin{minipage}{\dimexpr\linewidth-2\fboxsep\relax}
\textbf{Klear+DECS}\hfill
\textcolor{green!40!black}{\textbf{Correct: Pass}}
\hspace{0.6em}\textbf{9,389 tokens}
\par
\texttt{<think>} Okay, let's see. The problem is about removing characters
from the source string, but only from indices in \texttt{targetIndices}
\ldots
\hfill\textcolor{black!55}{\emph{[middle omitted; full response: 9,389 tokens]}}
\par
The DP array \texttt{dp} is used where \texttt{dp[i]} represents the minimal
cost to form the first \(i\) characters of the pattern:
{\ttfamily dp = [INF] * (m + 1); dp[0] = 0;\;
for j in range(m-1, -1, -1): \ldots\;
return len(targetIndices) - dp[m].}
\end{minipage}}
\end{minipage}}
\endgroup
\par\nobreak
\caption{Accuracy--efficiency composition in mathematics (Case C) and code
(Case D). Klear+DECS retains Klear-only's correct outcome while producing the
shortest response in both cases.}
\label{fig:case-accuracy-efficiency}
\end{casefigure}

\FloatBarrier

\section{Limitations and Future Work}
\label{app:limitations}

\small
\noindent\textbf{Limitations.}
First, due to computational constraints, our experiments focus on models
below 10B parameters, and we have not yet established whether the observed
benefits persist at larger scales. Second, our method assumes access to both
a post-trained model exhibiting the desired capability and its corresponding
pre-anchor, so that the capability-specific policy shift can be recovered.
Such matched anchor pairs may not always be available, particularly for new
tasks. Third, although \method improves the accuracy--efficiency balance, its
efficiency gains remain smaller than those achieved by methods optimized
exclusively for efficient reasoning.

\noindent\textbf{Future work.}
We plan to evaluate \method on larger models and a broader range of model
architectures. We will also extend capability composition to new domains,
particularly agentic reasoning and tool-use tasks.
\normalsize

\end{document}